\documentclass[arxiv]{melba}
\usepackage[T1]{fontenc}

\usepackage{subcaption} 
\usepackage{tabularx}
\usepackage{tikz} 
\usetikzlibrary{spy} 

\newtheorem{assumption}{Assumption}

\DeclareMathOperator*{\argmin}{arg\,min}

\newcommand{\R}{\mathbb{R}}

\newcommand{\N}{\mathbb{N}}

\newcommand{\dist}{\mathrm{dist}}

\newcommand{\Ac}{\mathcal{A}}
\newcommand{\Bc}{\mathcal{B}}
\newcommand{\Cc}{\mathcal{C}}
\newcommand{\Dc}{\mathcal{D}}
\newcommand{\Ec}{\mathcal{E}}

\newcommand{\Sc}{\mathcal{S}}
\newcommand{\Vc}{\mathcal{V}}
\newcommand{\Xc}{\mathcal{X}}

\newcommand{\vb}{\mathbf{v}}
\newcommand{\xb}{\mathbf{x}}
\newcommand{\yb}{\mathbf{y}}
\newcommand{\zb}{\mathbf{z}}

\newcommand{\eqdef}{\stackrel{\mathrm{def}}{=}}

\melbaid{2025:016}  
\doi{10.59275/j.melba.2025-f7b3}
\melbaauthors{Cazorla, Munier, Morin and Weiss}  
\email{clement.cazorla31@gmail.com}
\volume{3}
\firstpageno{367}  
\melbayear{2025}  
\datesubmitted{12/2024}  
\datepublished{08/2025}  

\ShortHeadings{MELBA Journal Sample Article}{Cazorla, Munier, Morin and Weiss}

\title{Sketchpose: Learning to Segment Cells with Partial Annotations}

\author{
    \firstname Clément \surname  Cazorla \aff{1,2}\orcid{0009-0003-4499-9004},
    \firstname Nathanaël \surname Munier \aff{1}\orcid{0000-0002-9264-2574},
    \firstname Renaud  \surname Morin \aff{2},
    \firstname Pierre  \surname Weiss \aff{1}\orcid{0000-0001-9083-214X}
}
\affiliations{
    \num 1 \addr Institut de Recherche en Informatique de Toulouse (IRIT), Institut de Mathématiques de Toulouse (IMT), Centre de Biologie Intégrative (CBI), Laboratoire de Biologie Moléculaire, Cellulaire et du Développement (MCD), Université de Toulouse, CNRS, Université Toulouse III – Paul Sabatier, Toulouse, France \\
    \num 2 \addr Imactiv-3D, Centre Pierre Potier, 1 place Pierre Potier, 31100 Toulouse, France \\
}

\abstract{
The most popular networks used for cell segmentation (e.g. Cellpose, Stardist, HoverNet,...) rely on a prediction of a distance map.
It yields unprecedented accuracy but hinges on \emph{fully annotated} datasets. 
This is a serious limitation to generate training sets and perform transfer learning. 
In this paper, we propose a method that still relies on the distance map and handles partially annotated objects.
We evaluate the performance of the proposed approach in the contexts of frugal learning, transfer learning and regular learning on regular databases.
Our experiments show that it can lead to substantial savings in time and resources without sacrificing segmentation quality.
The proposed algorithm is embedded in a user-friendly Napari plugin.}

\keywords{Cellpose, Deep learning, Distance Map, Frugal learning, Napari, Segmentation}

\begin{document}

\twocolumn[\maketitle]



\section{Introduction}


 
 


Image segmentation plays a fundamental role in the analysis of biological images. It enables the extraction of quantitative information on diverse objects ranging from molecules, droplets, membranes, nuclei, cells, vessels or other structures. In modern biological research, accurate segmentation is often pivotal to better understand the mechanisms of life. 
The increasing availability of high-throughput imaging technologies has led to a surge in the quantity and complexity of image data, raising significant challenges and opportunities. Manual annotation of the resulting images is labor-intensive, time-consuming, and often impractical for large-scale datasets. Automated segmentation is therefore widely accepted as a critical step in biological research. 

\paragraph{A simplified history of cell segmentation}

Image segmentation has long been dominated by handcrafted algorithms. 
The processing pipelines typically combine popular tools such as linear filtering, thresholding \cite{otsu1979threshold}, morphological operations \cite{serra2012mathematical,legland2016morpholibj}, active contour models (Snake) \cite{kass1988snakes} or watershed \cite{vincent1991watersheds}. A significant issue with handcrafted approaches is that they are usually image-specific and rely on the manual tuning of a few complicated hyper-parameters. 
Although excellent performance can be achieved, it is often the work of a handful of talented people and these techniques are not broadly applicable.

The introduction of machine learning and especially random forests made image segmentation accessible to a much larger range of researchers. These techniques automatically combine and tune elementary image processing bricks. They are driven by a few easily interpretable user annotations. Embedded in well conceived software such as Ilastik \cite{berg2019ilastik} or Labkit \cite{arzt2022labkit}, these techniques heavily contributed to democratize image segmentation and classification.

Deep learning and convolutional neural networks played an important role in improving the segmentation performance around 2015. 
For instance, the popular U-Net architecture \cite{ronneberger2015u} increased the accuracy on some cell segmentation challenges by more than 10\%, which can be considered as a small revolution. 
This type of neural network architecture seems to be a good prior for segmenting ``natural'' images, as suggested by the so-called Deep Image Prior principle \cite{lempitsky2018deep}.
However, it can sometimes demonstrate limited effectiveness when it comes to separating nearby or touching objects. 
Many applications in biology involve densely packed objects (e.g. cells, nuclei) and a pixel-classification U-Net is often insufficient to perform a satisfactory analysis.
To address this issue, new architectures coming from computer vision such as Mask R-CNN~\cite{he2017mask} have been developed and continued improving the performance. 

Roughly at the same time, a few approaches (Deep watershed transform~\cite{bai2017deep}, Deep Regression of the Distance Map~\cite{naylor2018segmentation,Kumar2019}, StarDist~\cite{schmidt2018cell}, Hover-Net \cite{graham2019hover}, Cellpose~\cite{stringer2021cellpose}, Omnipose~\cite{cutler2022omnipose}) have been developed and generated results with an unprecedented quality. 
Despite certain differences, they all share a common underlying principle. 
The idea is to make a regression with respect to some \emph{distance function}. 
Given a set of annotated objects, a distance function to the objects centers or boundaries is computed. 
A convolutional neural network is then trained to predict the distance function rather than a binary map of the objects.
The gradient of this distance function points in opposite directions on each side of the boundary, which makes it possible to determine them with much greater precision. 
This principle created a new gap in the segmentation accuracy, especially for objects with touching boundaries. 

A current trend consists in involving the user in the training procedure.
This ``human in the loop'' principle was incorporated in CellPose 2.0 \cite{pachitariu2022cellpose}.
Users fully annotate patches of the segmented image, to adapt the neural network weights to the image at hand. 

It would be  hazardous to call these approaches the current ``state-of-the-art'', since this field is expanding extremely quickly.
However -- as of 2025 -- we can safely claim that algorithms based on the distance map are at the basis of some of the most popular and efficient cell segmentation methods.

\paragraph{Contributions}

This work stems from a practical observation: methods which rely on a regression to the distance function currently require exhaustive annotations. 
As the distance function is a global geometrical property, it is impossible to compute it using just a few sketches.
Hence, it is \textit{a priori} unclear how partial annotations can be used in this framework, see Figure~\ref{fig:main_question}.
Cellpose 2 gets around this problem by allowing the user to annotate patches of interest in their entirety.
Similarly, \cite{sugawara2023training} recently proposed a simple extension of Stardist and Cellpose by training the networks on a subset of completely annotated objects.
This is a time-consuming process that does not allow the expert to focus on local spots (e.g. a part of boundary) where the network clearly missed the segmentation.

\begin{figure}[h!]
	\centering
    \begin{subfigure}[b]{0.23\textwidth}
        \centering \includegraphics[height=0.8\textwidth]{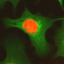}
        \caption{Image}
        \label{fig:init_img}
    \end{subfigure}
    ~
    \begin{subfigure}[b]{0.23\textwidth}
        \centering \includegraphics[height=0.8\textwidth]{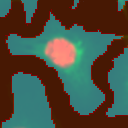}
        \caption{Full annotation}
        \label{fig:full}
    \end{subfigure}
    
    \begin{subfigure}[b]{0.23\textwidth}
        \centering \includegraphics[height=0.8\textwidth]{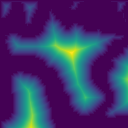}
        \caption{Distance map}
        \label{fig:distance}
    \end{subfigure}
	~
	\begin{subfigure}[b]{0.23\textwidth}
        \centering \includegraphics[height=0.8\textwidth]{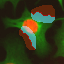}
        \caption{Partial annotation}
        \label{fig:partial}
    \end{subfigure}
	
	\caption{Partial VS full annotation. In (b), the complete annotation is used to compute the distance map shown in (c). In (d), it is unclear how to compute it from the partial object annotation.}
    \label{fig:main_question}
\end{figure}

In this paper, we introduce a novel idea that allows us to use the distance function even with partially annotated objects.
After drawing just a few regions and boundaries, the user can train a task-aware neural network. 
This approach capitalizes on the generalization capacity of neural networks, reducing the overall annotation effort without sacrificing accuracy. 
We explore the performance of the proposed architecture in 3 different settings:
\begin{itemize}
	\item \emph{Few-shot learning:} starting from random weights, we show that just a few \emph{partial} annotations are already enough to quickly realize complex cell segmentation analyses. This is interesting when faced with a problem for which no close pre-trained model exists.
	\item \emph{Transfer learning:} starting from Omnipose's optimized weights, we show that just a few clicks at locations where the segmentation is inaccurate lead to improved weights and fast adaptation to out-of-distribution images. This is the traditional field of transfer learning, domain adaptation, e.g.. Our contribution here is to show that this can be done with only a few scattered annotations.
	\item \emph{Large databases:} finally, we show that large, but partially annotated sets can also be used to train high performance neural networks. This is important since it can significantly accelerate the design of segmentation databases.  
\end{itemize}
This evaluation on both small and large-scale dataset, overall showcases the advantages of our approach in terms of time and resource savings. 
We developed a Napari plugin \cite{chiu2022napari} named Sketchpose, to assess its potential, ensure reproducibility of the results and provide an additional tool to the community. 
It relies on a modified version of the Omnipose \cite{cutler2022omnipose} algorithm. The plugin is currently being downloaded regularly, with 638 downloads to date.

\section{Methodology}

\subsection{Preliminary definitions and notations}

In all the paper $\mathcal{X}$ refers to the image domain, which can be understood as a discrete set of coordinates, or as a continuous domain depending on the context. 
In the discrete setting, we let $|\Xc|$ denote the number of pixels of $\Xc$. 

\begin{definition}
For an arbitrary set $\Sc \subset \Xc$, we let $\partial \Sc$ denote its boundary.    
We use the 4-connectivity (top, bottom, left, right) in the discrete setting.
\end{definition}

\begin{definition}[Point to set distance]
The distance from a point $\xb\in \Xc$ to a set $\Sc \subseteq \Xc$ is defined by    
\begin{equation}
    \dist(\xb,\Sc) \eqdef \inf_{\xb'\in \Sc} \|\xb-\xb'\|_2.
\end{equation}
\end{definition}

\subsection{Omnipose}

Our work is based on the Omnipose cell segmentation architecture \cite{cutler2022omnipose}.
In this section, we justify this choice, explain its founding principles and then demonstrate how they can be adapted to deal with partial annotations. 

\subsubsection{Why Omnipose}
Cellpose~\cite{stringer2021cellpose} has now become a standard in cell segmentation. 
Its excellent perfomance, processing speed, and ergonomic graphical interface make it a handy tool for every day cell biology image analysis.
However, it occasionally fails in scenarios involving complex and elongated objects. In such cases, it tends to produce over-segmentation, where neighboring objects are split in smaller fragments. 

The Omnipose algorithm~\cite{cutler2022omnipose} was conceived in order to address this limitation.
The main difference between Omnipose and Cellpose is the fact that the distance map is defined as the distance to the cell boundaries in Omnipose, while it is defined as a distance to a cell ``centroid'' in Cellpose. A weakness of the latter is that there is no canonical choice to define this center, hence Omnipose's choice seems more principled.
This explains our decision to choose and base our work on its architecture.


\subsubsection{The main principles}

\begin{figure*}
    \centering
    \includegraphics[width=0.9\linewidth]{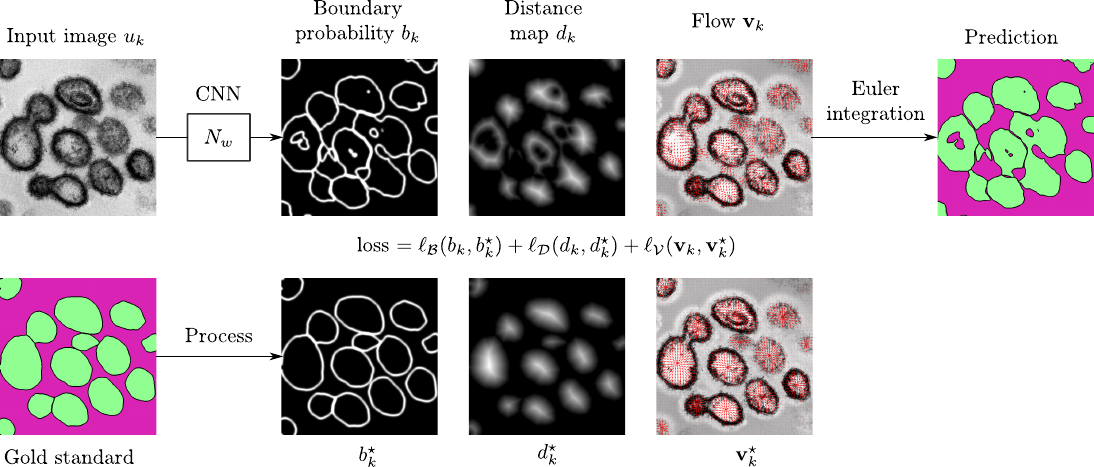}
    \caption{A sketch of the Omnipose training procedure \label{fig:omnipose_principle}}
\end{figure*}

Figure \ref{fig:omnipose_principle} summarizes the main ideas behind the Omnipose architecture and its training. 
Omnipose is based on a regular convolutional neural network (CNN), with a U-Net like architecture \cite{ronneberger2015u}. 
Given an input 2D image with $N$ pixels, the CNN can be seen as a mapping $N_w^{\textrm{omni}}$ of the form
\begin{equation}
    \begin{array}{ccccccccc}
    N_w^{\textrm{omni}} & : & \R^N & \to  & \R^{N}   & \times & \R^{N}   & \times & \R^{2N} \\
        &   & u & \mapsto & (N_w^b(u)& ,      & N_w^d(u) & ,      & N_w^{\vb}(u)) \\
    \end{array}.
\end{equation}
It depends on weights $w$ that should be optimized during a training stage.
It returns $3$ different outputs (illustrated on the top of Figure \ref{fig:omnipose_principle}):
\begin{itemize}
     \item $N_w^b(u) \equiv$ \emph{boundary probability}: at every pixel, the value of this image can be interpreted as a probability of being a boundary between the objects to segment.
     \item $N_w^d(u) \equiv$ \emph{distance map}: at a given pixel, the value of this map is equal to:
     \begin{itemize}
         \item The distance of the pixel to the closest object boundary, if the pixel is inside an object.
         \item $0$ (or a fixed negative value) elsewhere.
     \end{itemize} 
     \item $N_w^{\vb}(u) \equiv$ \emph{flow field}: can be interpreted as the gradient of the distance map. It is an essential feature of the Cellpose and Omnipose architectures. Ultimately, the flow is used through a procedure called Euler integration to generate a segmentation mask. This is illustrated on the top right of Figure \ref{fig:omnipose_principle}. 
 \end{itemize}  

\subsubsection{The original loss definition\label{sec:losses}}

The original training stage involves a collection of $K\in \N$ images $(u_k)_{1\leq k \leq K}$ together with their \emph{exhaustive} segmentation masks.
For every image $u_k$ in the dataset, an algorithm creates the gold standard boundary probability $b_k^\star$, distance map $d_k^\star$ and flow field $\vb_k^\star$. 
This is illustrated on the bottom of Figure \ref{fig:omnipose_principle}.

The weights $w$ of the neural network are then optimized so as to minimize a loss function that compares the output of the CNN with the gold standard:
\begin{equation}
    \inf_{w} \mathrm{loss}^{\mathrm{omni}}(w) \eqdef \frac{1}{K}\sum_{k=1}^{K} \ell_{\mathcal{B}}(b_{k}, b_k^\star) + \ell_{\mathcal{D}}(d_{k}, d_k^\star) + \ell_{\mathcal{V}}(\vb_{k}, \vb_k^\star),
\end{equation}
where $b_k = N_w^b(u_k)$, $d_k = N_w^d(u_k)$, $\vb_k = N_w^{\vb}(u_k)$.  

In the original Omnipose implementation available on \href{https://github.com/kevinjohncutler/omnipose}{GitHub}, the different losses were defined as follows:
\begin{itemize}[leftmargin=*]
    \item \textbf{Boundary loss} $\ell_{\mathcal{B}}$:  

        This term compares the predictions $b$ to $b^\star$ using the following loss:
\begin{equation}
\ell_{\mathcal{B}}(b, b^\star) \eqdef \frac{\lambda_{\mathcal{B}}}{|\Xc|}\sum_{\xb\in \Xc} g(b[\xb], b^\star[\xb]), 
\end{equation}
where $g:\R\times \R\to \R$ combines a sigmoid and a binary cross entropy loss.
    
    \item \textbf{Distance loss} $\ell_{\mathcal{D}}$: 
    
This loss calculates a weighted mean squared error between the predicted distance fields and the ground truth distance fields. It is defined as
    \begin{equation*}
        \ell_{\mathcal{D}}(d, d^\star) \eqdef \frac{\lambda_{\mathcal{D}}}{|\Xc|} \sum_{\xb\in \mathcal{X}} (d[\xb] - d^\star[\xb])^2 \cdot \rho[\xb],
    \end{equation*}
    where $\rho\in \R^N$ is a weight image with higher values around the gold standard boundaries.
    
    \item \textbf{Flow loss} $\ell_{\mathcal{V}}$: 

This loss is defined as a weighted sum of three losses $\ell_{\mathcal{V}}= \ell_{\mathcal{V}}^1 + \ell_{\mathcal{V}}^2 + \ell_{\mathcal{V}}^3$. 
The first one is a mean squared error loss:
\begin{equation}
    \ell_{\mathcal{V}}^1 (\vb, \vb^\star) \eqdef \frac{\lambda_{\mathcal{V},1}}{|\Xc|} \sum_{\xb\in \mathcal{X}} \left\|\vb[\xb] - \vb^\star[\xb]\right\|_2^2 \cdot \rho[\xb].
\end{equation}

The second one compares the norms of the vector fields:
\begin{equation}
    \ell_{\mathcal{V}}^2(\vb, \vb^\star) \eqdef \frac{\lambda_{\mathcal{V},2}}{|\Xc|} \sum_{\xb\in \mathcal{X}} (\|\vb[\xb]\|_2 - \|\vb^\star[\xb]\|)^2 \cdot \rho[\xb].
\end{equation}

The third one aims to minimize the distance between trajectories generated through the ground truth and predicted flows. 
Trajectories starting from an initial point $\zb$ can be generated by simple explicit Euler discretization:
\begin{align*}
    \xb_{0}(\zb)   &\eqdef \zb \\
    \xb_{l+1}(\zb) &\eqdef \xb_l(\zb) + \Delta t \cdot\vb[\xb_{l}(\zb)]
\end{align*}
\begin{align*}
\xb^\star_{0}(\zb)   &\eqdef \zb \\
\xb^\star_{l+1}(\zb) &\eqdef \xb_l^\star(\zb) + \Delta t  \cdot \vb^\star[\xb^\star_{l}(\zb)].
\end{align*}
where $\Delta t$ is a step-size. 
Letting  $L\in \N$ denote an integration time, the ``Euler'' loss then becomes:
\begin{equation}
    \ell_{\mathcal{V}}^3 (\vb, \vb^\star) \eqdef \frac{\lambda_{\mathcal{V},3}}{|\Xc|} \sum_{\zb\in \Xc} \sum_{l=1}^L \|\xb_l(\zb) - \xb_l^\star(\zb)\|_2^2.
\end{equation}
It measures how two trajectories generated by Euler integration using the ground truth and predicted vector fields deviate.
This loss is implemented in the torchVF library by \cite{TorchVF}. 
For more information, we refer the reader to the \href{https://github.com/ryanirl/torchvf/blob/main/article/first_draft.pdf}{related report}.
\end{itemize}

An inspection of the code reveals that the different weights have been set empirically as: $\lambda_{\mathcal{B}}=10$, $\lambda_{\mathcal{D}}=2$, $\lambda_{\mathcal{V},1}=2$, $\lambda_{\mathcal{V},2}=2$, $\lambda_{\mathcal{V},3}=1$.
\begin{remark}
    The different losses have probably been combined by trial and error to produce the best possible results. 
    However, there are clear redundancies in the definitions of the losses, for instance $\ell_{\mathcal{V}}^1$, $\ell_{\mathcal{V}}^2$ and $\ell_{\mathcal{V}}^3$ are all measuring the distance between flows using different metrics.
    In our implementation, we tried to simplify the losses as much as possible, while still maintaining a good performance.
\end{remark}

\subsection{Adapting to partial annotations}

All the principles described above heavily depend on an exhaustive segmentation of the cells. 
Indeed, the distance functions and gradient flows -- which are instrumental to define the loss functions -- are global properties which do change heavily if the objects boundaries are incomplete. 
In this section, we describe the main methodological contribution of this paper, which will allow us to handle partial boundaries.

\subsubsection{The gold standard}

\begin{figure}[h!]
    \centering
    \begin{minipage}[c]{0.4\columnwidth} 
        \centering
        \begin{subfigure}[b]{\textwidth}
            \centering \includegraphics[height=3.3cm]{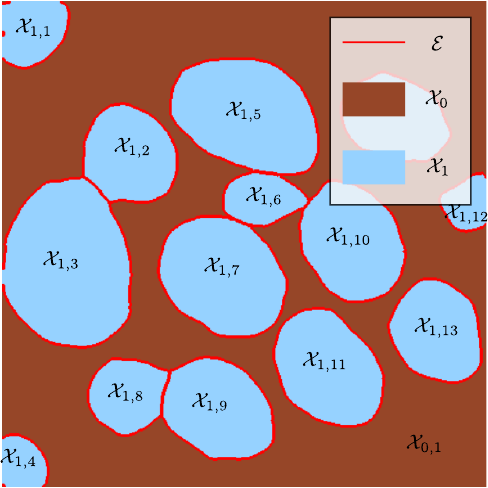}
            \caption{Gold-standard}\label{fig:annot1}
        \end{subfigure}
        \begin{subfigure}[b]{\textwidth}
            \centering \includegraphics[height=3.3cm]{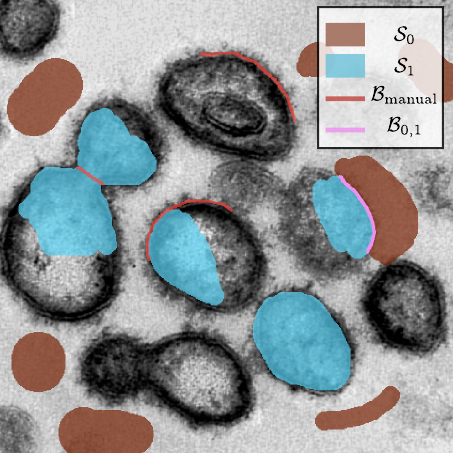}
            \caption{Valid annotation}\label{fig:annot2}
        \end{subfigure}
    \end{minipage}
    \begin{minipage}[c]{0.4\columnwidth} 
        \centering
        \begin{subfigure}[b]{\textwidth}
            \centering \includegraphics[height=2cm]{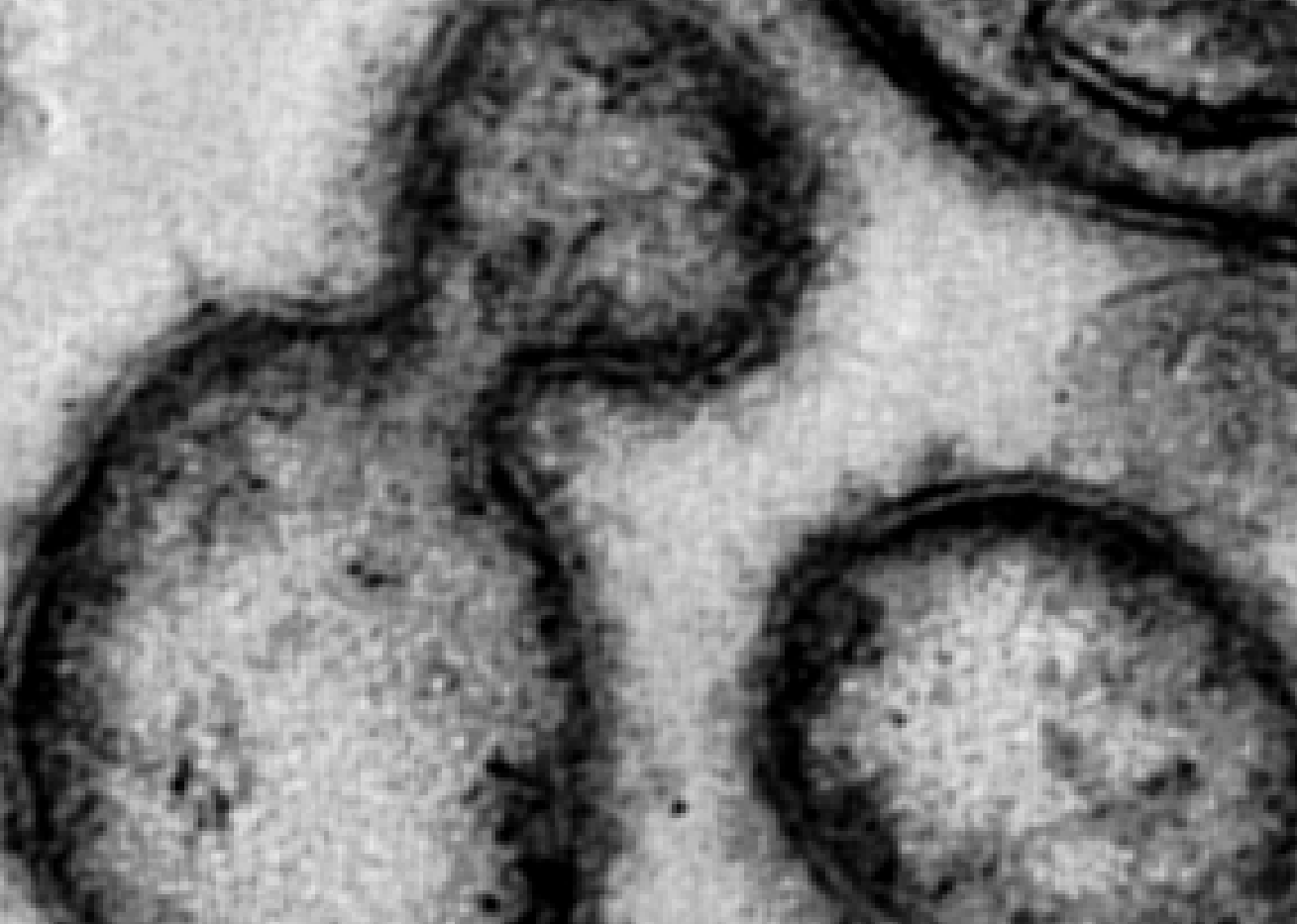}
            \caption{Image}\label{fig:original}
        \end{subfigure}
        \begin{subfigure}[b]{\textwidth}
            \centering \includegraphics[height=2cm]{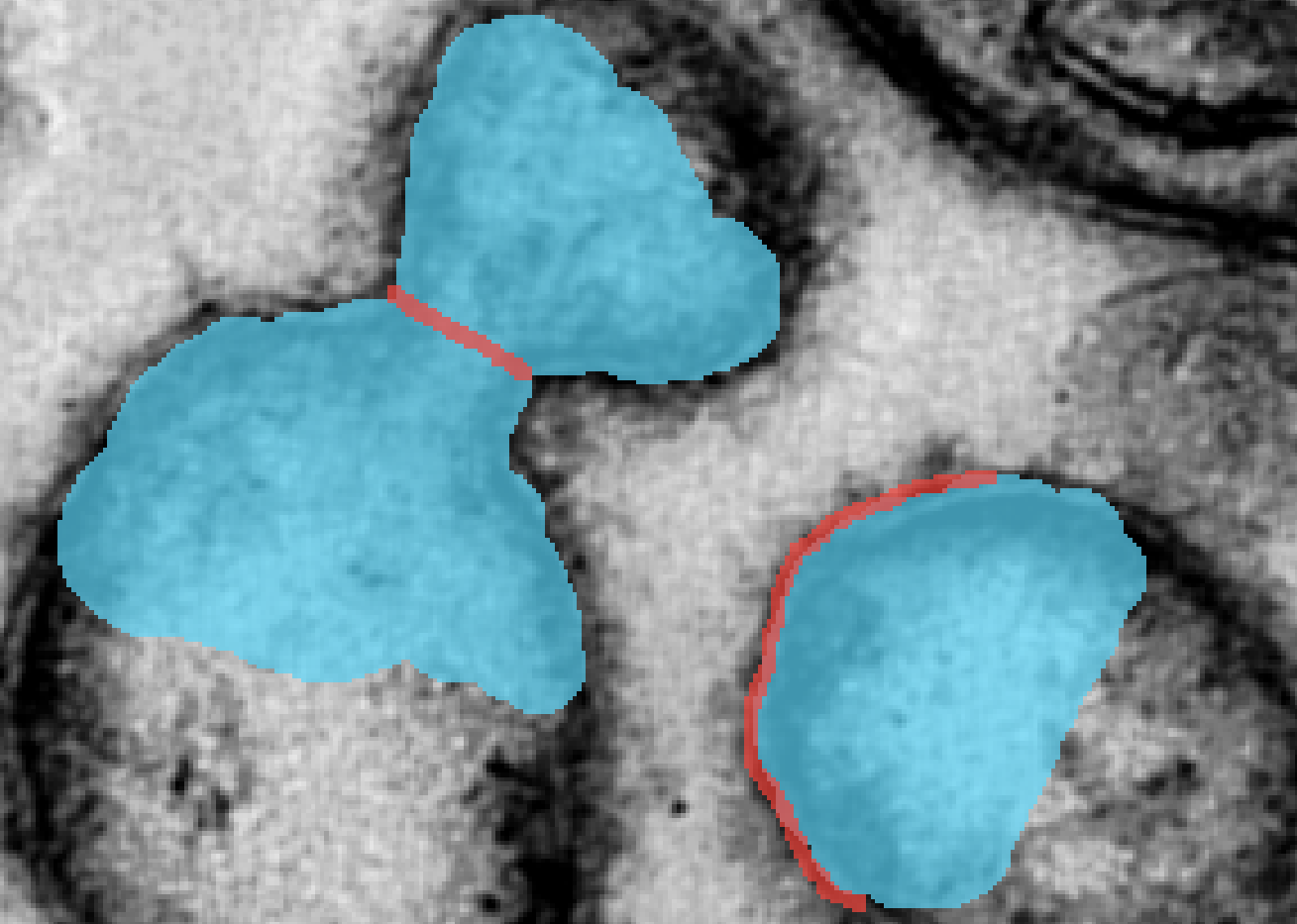}
            \caption{Admissible}\label{fig:admissible}
        \end{subfigure}
        \begin{subfigure}[b]{\textwidth}
            \centering \includegraphics[height=2cm]{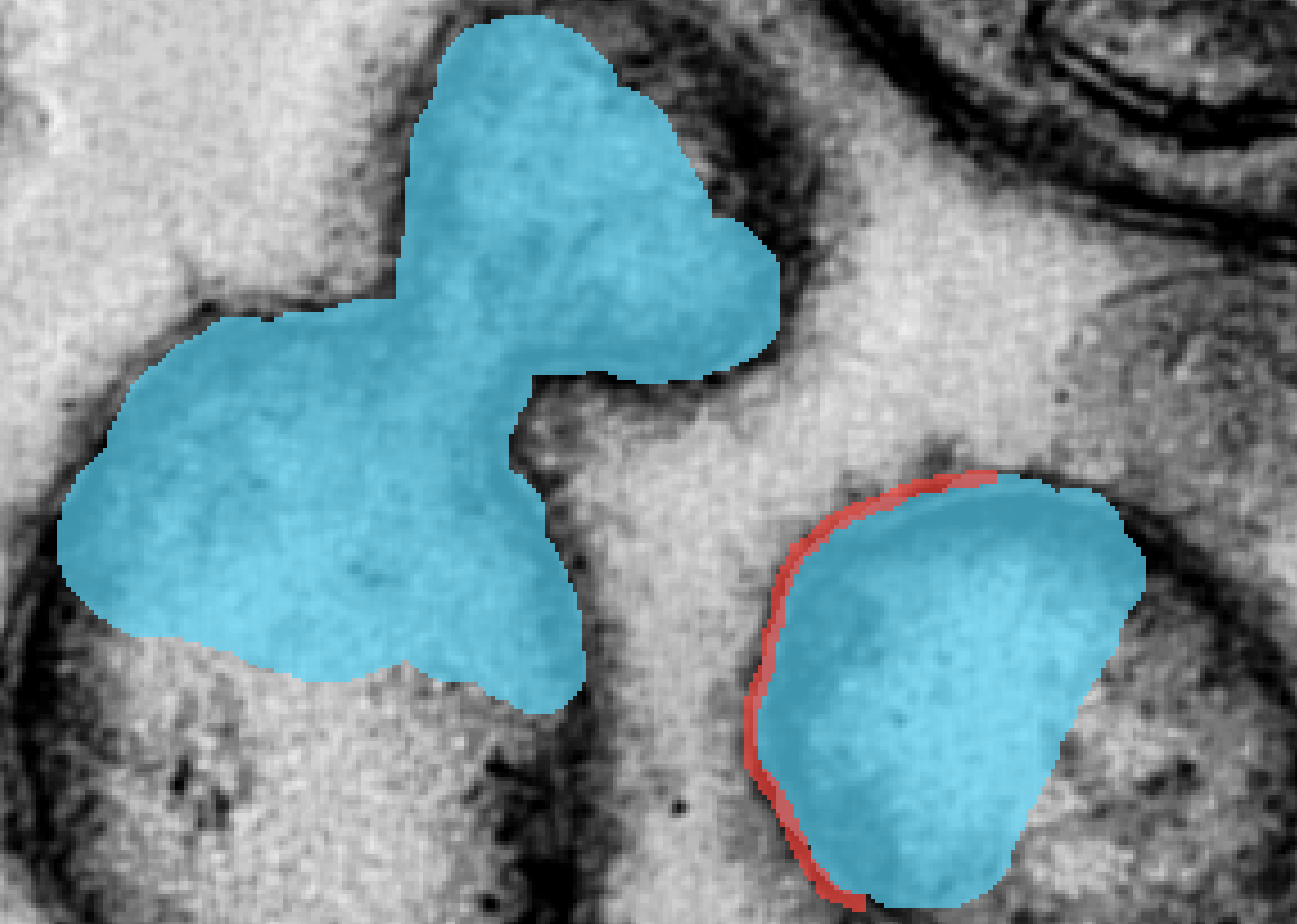}
            \caption{Not admissible}\label{fig:notadmissible}
        \end{subfigure}
    \end{minipage}
    \caption{(Left) Ground-truth and an admissible annotation set. (Right) If a stroke contains multiple objects, the object boundaries have to be drawn. In this example, two nuclei are present under the blue bow-tie-shaped region. Therefore, a manual boundary has to be added in the center.}\label{fig:combined}
\end{figure}


\begin{table}[ht]
\centering
\caption{
Summary of notations.}
\label{tab:notations}
\renewcommand{\arraystretch}{1.1}
\begin{tabularx}{\columnwidth}{@{}lX@{}}
\toprule
\textbf{Notation} & \textbf{Description} \\
\midrule
$\mathcal{X} = \mathcal{X}_0 \cup \mathcal{X}_1$ & Image domain \\
$\mathcal{X}_0$ & True background \\
$\mathcal{X}_1$ & True foreground \\
$\mathcal{S}_0$ & Background strokes \\
$\mathcal{S}_1$ & Foreground strokes \\
$\mathcal{E}$ & True boundaries \\
$\mathcal{B} = \mathcal{B}_{\mathrm{manual}} \cup \mathcal{B}_{0,1}$ & User-defined boundaries \\
$\mathcal{D}$ & Valid distance set \\
\bottomrule
\end{tabularx}
\end{table}

The notations are summarized in Table \ref{tab:notations}.
We assume that the domain $\Xc = \Xc_0 \sqcup \Xc_1$ is partitioned with the background set $\Xc_0$ and the foreground set $\Xc_1$.
A difficulty in instance segmentation is that multiple objects may exist within the connected components of a region $\Xc_i$. 
To differentiate them, we let $(\Xc_{i,j})_{1\leq j \leq J_i}$ denote a partition of the set $\Xc_i$ as different objects within a similar class. 
For instance in Figure \ref{fig:annot1}, the foreground set $\Xc_1$ is split in 13 components. 
A connected component of $\Xc_1$ can be split as $\Xc_{1,2}\cup \Xc_{1,3}$.
The background set $\Xc_0$ is split in a single component $\Xc_{0,1}$.

We let 
\begin{equation}
    \Ec \eqdef \bigcup_{i\in \{0,1\}} \bigcup_{j=1}^{J_i} \partial \Xc_{i,j}
\end{equation}
denote the set of all \emph{edges} (or object boundaries) within the image. 
It is depicted in red in Figure \ref{fig:annot1}.

\subsubsection{The annotation set}

The input of our neural network is a set of ``sketches'' or strokes drawn by the user. 
We let $\Sc_0$ and $\Sc_1$ denote the strokes describing the background and foreground respectively.
They are depicted in brown and blue respectively in Figure \ref{fig:annot2}.
The intersection of the brown and blue strokes define natural boundaries. 
We can indeed construct the \emph{touching boundaries} $\Bc_{0,1}$ between different strokes as 
$$
\Bc_{0,1} \eqdef \overline{\Sc_{0}} \cap \overline{\Sc_{1}},
$$
where $\overline{\Xc}$ is the closure of $\Xc$ in the continuous setting and the interface between neighboring pixels in the discrete setting. 

In addition, the user can delineate other boundaries, denoted $\Bc_{\textrm{manual}}$, to separate touching objects within a class. 
We can concatenate all the boundaries to obtain a complete boundary set $\Bc$ defined as
\begin{equation}
  \Bc = \Bc_{\textrm{manual}} \bigcup \Bc_{0,1}.
\end{equation}

For the algorithm to work properly, we require the following set of assumptions.
\begin{assumption}[Assumptions on the strokes\label{ass:ass1}]
\ 
\begin{itemize}
    \item The strokes correctly separate the background and foreground: $\Sc_0\subseteq \Xc_0$ and $\Sc_1\subseteq \Xc_1$.
    \item The strokes do not overlap: $\Sc_0 \cap \Sc_1 = \emptyset$. This is actually forced by our Napari interface.
    \item The boundaries $\Bc$ are a subset of the exact boundaries $\Ec$, that is:
            \begin{equation} \label{eq:eq_ass1}
            \Bc \subseteq \Ec.    
            \end{equation}
    \item If the stroke $\Sc_i$ contains multiple objects, then the boundaries between the objects \emph{need to be completely drawn} with  $\Bc_{\textrm{manual}}$ (see Figure \ref{fig:admissible}). Letting $\Sc\eqdef \Sc_0 \cup \Sc_1$ denote the complete stroke set drawn by the user, this condition reads:
        \begin{equation}
            \Ec \cap \Sc \subseteq \Bc. \label{eq:boundaries_drawn}
        \end{equation}
\end{itemize}
\end{assumption}


\subsubsection{The main observation}\label{section:valid_dist}

The main result we will use to define and certify our algorithm is summarized in the following theorem.

\begin{theorem}[The valid distance set\label{thm:valid_dist_set}]
Let $\Cc\Bc \eqdef \partial \Sc_0 \cup \partial \Sc_1 \cup \Bc$ denote the complete set of annotation boundaries and define the \emph{valid distance set} $\Dc$ as
\begin{equation}
    \Dc \eqdef \left\{x\in \Sc, \dist(x,\Bc) \leq \dist\left(x, \Cc\Bc\right)\right\}.
\end{equation}
The following relationships hold:
\begin{align*}
    \textrm{For all } \quad x\in \Dc, \quad &\dist(x, \Ec) = \dist(x,\Bc). \\
    \textrm{For all } \quad x\in \Sc_0\cup \Sc_1, \quad &\dist(x, \Cc\Bc) \leq \dist(x,\Ec).
\end{align*}
\end{theorem}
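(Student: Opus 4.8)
The plan is to prove the second relation first and then read off the first one as a one-line consequence. Observe at the outset that since $\Bc \subseteq \Ec$ by Assumption~\ref{ass:ass1}, enlarging a set can only decrease the distance to it, so
\begin{equation*}
    \dist(x, \Ec) \leq \dist(x, \Bc) \quad \text{for every } x,
\end{equation*}
which already gives one half of the first claim for free. The entire difficulty is therefore concentrated in the inequality $\dist(x,\Cc\Bc) \le \dist(x,\Ec)$ for $x \in \Sc_0 \cup \Sc_1 = \Sc$; once it is available, the first relation follows by chaining inequalities.

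The engine of the argument is the structural inclusion $\close{\Sc} \cap \Ec \subseteq \Cc\Bc$. To establish it I would write $\close{\Sc} = \Sc \cup \partial \Sc$ and split the intersection as $(\Sc \cap \Ec) \cup (\partial\Sc \cap \Ec)$. The first piece is controlled by the admissibility hypothesis \eqref{eq:boundaries_drawn}, which states exactly $\Ec \cap \Sc \subseteq \Bc$. For the second piece I would invoke the elementary fact that the boundary of a union lies in the union of the boundaries, $\partial\Sc = \partial(\Sc_0 \cup \Sc_1) \subseteq \partial\Sc_0 \cup \partial\Sc_1$. Both pieces then land inside $\Cc\Bc = \partial \Sc_0 \cup \partial \Sc_1 \cup \Bc$, proving the inclusion. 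This is precisely where the standing assumptions are consumed; the remainder is geometry.

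For the second relation, I fix $x \in \Sc$ and a point $e \in \Ec$ realizing (or, via a minimizing sequence, approaching) $\dist(x,\Ec) = \|x - e\|_2$; this is legitimate because $\Ec$ is closed as a finite union of boundaries. If $x \in \partial\Sc \subseteq \Cc\Bc$ the claim is immediate, so I would assume $x \in \inter{\Sc}$. Consider the segment $\gamma(t) = (1-t)x + te$ for $t\in[0,1]$, and let $t_0$ be the first time it meets the closed set $\Ec \cup \partial\Sc$; such a time exists since $\gamma(1) = e \in \Ec$. For $t < t_0$ the path avoids $\partial\Sc$ and starts in the open set $\inter{\Sc}$, hence stays in $\inter{\Sc}$ by connectedness (a continuous path cannot leave the interior without crossing the boundary). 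Consequently $y \eqdef \gamma(t_0) \in \close{\Sc}$, while by construction $y \in \Ec \cup \partial\Sc$. Either $y \in \partial\Sc \subseteq \Cc\Bc$, or $y \in \close{\Sc}\cap\Ec \subseteq \Cc\Bc$ by the inclusion above; in both cases $y \in \Cc\Bc$ with $\|x - y\|_2 = t_0\|x-e\|_2 \le \dist(x,\Ec)$, whence $\dist(x,\Cc\Bc) \le \dist(x,\Ec)$. I expect this \emph{first-exit} step to be the main obstacle: it is the only place where genuinely topological (as opposed to set-algebraic) reasoning enters, and it is where one argues that leaving $\Sc$ or hitting a true edge necessarily manufactures an annotation boundary.

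Finally, the first relation drops out by a sandwich. For $x \in \Dc$ we have $x \in \Sc$ and, by the definition of $\Dc$, $\dist(x,\Bc) \le \dist(x,\Cc\Bc)$; combining this with the second relation and the trivial bound above yields
\begin{equation*}
    \dist(x,\Bc) \le \dist(x,\Cc\Bc) \le \dist(x,\Ec) \le \dist(x,\Bc),
\end{equation*}
forcing equality throughout, and in particular $\dist(x,\Ec) = \dist(x,\Bc)$. The only remaining care is the transcription to the discrete setting, where the segment $\gamma$ is replaced by a discrete pixel path toward the nearest edge and closures/boundaries by the pixel-interface notions defined earlier; the same first-exit reasoning applies verbatim.
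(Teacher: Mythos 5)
Your proof is correct, and it takes a genuinely different route from the paper's. The paper proves the identity $\dist(x,\Ec)=\dist(x,\Bc)$ on $\Dc$ \emph{directly}, by contradiction: assuming $\dist(x,\Ec)<\dist(x,\Bc)$, it splits into the cases $\dist(x,\Ec)=0$ and $\dist(x,\Ec)>0$, picks a near-minimizing edge point $\zb$, and distinguishes whether $\zb$ lies in the stroke $\Sc_{i_0}$ containing $x$ (then \eqref{eq:boundaries_drawn} forces $\zb\in\Bc$) or not (then a first-exit argument on the segment $[\xb,\zb]$ produces a point of $\partial\Sc_{i_0}\subseteq\Cc\Bc$); the second inequality of the theorem is then treated separately, and only sketched, via connected components and the second bullet of Proposition~\ref{prop:distance}. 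You invert this architecture: you prove the second inequality first, as a direct consequence of the structural inclusion $\close{\Sc}\cap\Ec\subseteq\Cc\Bc$ combined with a single first-exit argument along the segment toward the (near-)nearest point of $\Ec$, and then the first identity drops out of the sandwich $\dist(x,\Bc)\le\dist(x,\Cc\Bc)\le\dist(x,\Ec)\le\dist(x,\Bc)$, whose left inequality is precisely the definition of $\Dc$ --- an implication the paper never exploits, since it proves the two relations independently. Both arguments share the same topological core (a continuous path starting in $\inter{\Sc}$ cannot exit without meeting $\partial\Sc$) and consume the assumptions at the same places ($\Bc\subseteq\Ec$ for monotonicity, $\Ec\cap\Sc\subseteq\Bc$ for the key inclusion). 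What your route buys: one geometric lemma serves both claims, there is no contradiction or $\varepsilon$-bookkeeping, and your treatment of the second inequality is actually more complete than the paper's, whose reduction to Proposition~\ref{prop:distance} still tacitly requires that the nearest point of $\Ec$ to $x$ can be found on the boundary of the component containing $x$ --- exactly the first-exit step you make explicit. What the paper's route buys is mainly Proposition~\ref{prop:distance} itself, a standalone statement about nested sets that may be of independent interest.
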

The proof of this theorem is given in Appendix~\ref{appendixA}.
This theorem should be understood as follows. The first identity informs us that we can compute the \emph{exact distance map} $\dist(x,\Ec)$ to the set of exact boundaries $\Ec$ on the valid set $\Dc$. This set can be computed using only the partial annotations of the boundaries $\Bc\subseteq \Ec$ and the different semantic regions $\Sc_i\subseteq \Xc_i$.
The second inequality tells us that we have an information everywhere on the strokes $\Sc_0$ and $\Sc_1$.
Moreover, in the case of total annotations, we get $\Dc = \Xc$ and the proposed idea will lead to a training equivalent to the one in Omnipose and  we can see the proposed setting as a generalization. 
Figure \ref{figure:D} schematically summarizes Theorem \ref{thm:valid_dist_set}.

\begin{figure}[!ht]
 \centering
 \includegraphics[width=\columnwidth]{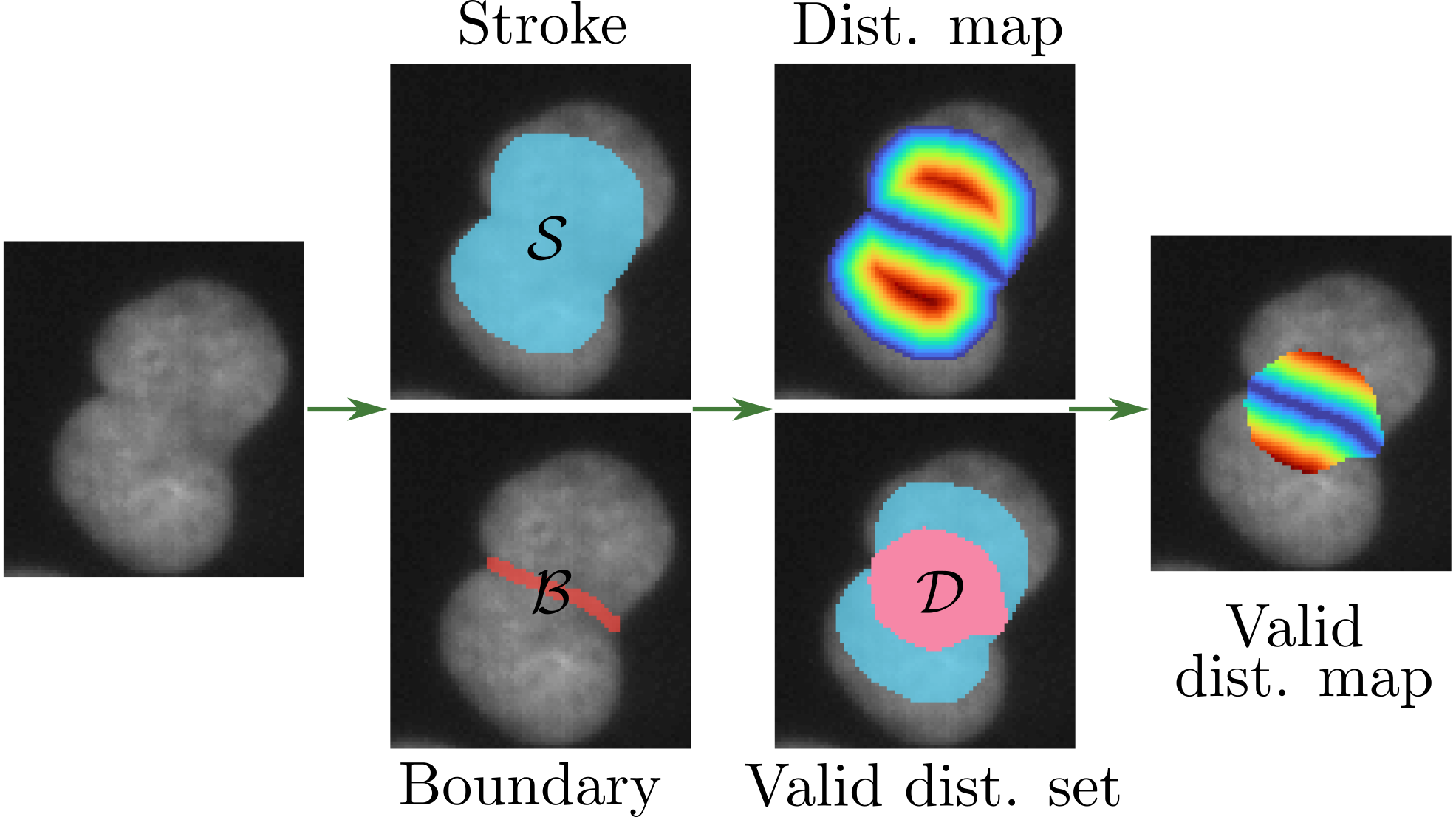}
  \caption{Illustration of the valid distance set theorem. All the pixels in $\Dc$ are closer to $\Bc$ than to the boundaries of $\Sc$. 
  The colormap used to represent the distance map exhibits a progressive shift in colors, transitioning from blue to red.\label{figure:D}}
\end{figure}

\subsubsection{Adapting the training}

\paragraph{A simpler architecture}

The prediction of the boundaries $N_w^b(u)$ is not necessary, since only the distance map $N_w^d(u)$ and the flow field $N_w^{\vb}(u)$ are needed to compute the final masks.
Hence, we keep the same U-Net architecture, but remove the channel associated to the boundaries:
\begin{equation}
    \begin{array}{ccccccc}
    N_w^{\textrm{sketch}} & : & \R^N & \to  & \R^{N}   & \times & \R^{2N} \\
        &   & u & \mapsto & (N_w^d(u) & ,      & N_w^{\vb}(u) )  \\
    \end{array}.
\end{equation}

\paragraph{Different summation sets}

Equipped with the valid distance set $\Dc$, we are ready to adapt the losses to cope with partial annotation. In Omnipose, the losses $\ell_{\Bc}$, $\ell_{\Dc}$, $\ell_{\Vc}^1$ and $\ell_{\Vc}^2$ are defined by summation over the set $\Xc$ (see paragraph \ref{sec:losses}). With partial annotation, the gold standard is not properly defined on this set and we therefore need to change the summation sets.

Based on Theorem \ref{thm:valid_dist_set}, the losses related to the distance set and to the flows become:
\begin{align*}
    \ell_{\mathcal{D}}^{\textrm{partial}}(d, d^\star) &\eqdef \frac{\lambda_{\mathcal{D}}}{|\Dc|} \sum_{\xb\in \mathcal{\Dc}} (d[\xb] - d^\star[\xb])^2, \\
    \ell_{\mathcal{V}}^{{\textrm{partial}}} (\vb, \vb^\star) &\eqdef \frac{\lambda_{\mathcal{V},1}}{|\Dc|} \sum_{\xb\in \Dc} \left\|\vb[\xb] - \vb^\star[\xb]\right\|_2^2. 
\end{align*}
We replaced $\sum_{\xb\in \Xc}$ by $\sum_{\xb\in \Dc}$ in section~\ref{sec:losses} and discarded the weights $\rho$. This means that we compare the ground truth and prediction only where it makes sense to do so.

\paragraph{Dealing with inequalities}

Until now, we just used the first identity in Theorem \ref{thm:valid_dist_set}, but the second inequality brings some additional information.
We propose to integrate it in the training through the additional asymmetric loss:
\begin{equation}
    \ell_{\Dc}^{\textrm{ineq}} (d, d^\star) \eqdef \frac{\lambda_{{\Dc}, \textrm{ineq}}}{|\Sc_1\cup \Sc_2|} \sum_{\zb\in \Sc_1\cup \Sc_2} \textrm{ReLU}^2\left(d[\zb] - d^\star[\zb]\right).
\end{equation}

\paragraph{Putting it all together}

We can now define the total sketchpose loss as:
\begin{equation}
 \mathrm{loss}^{\textrm{sketch}}(w) = \ell_{\mathcal{V}}^{{\textrm{partial}}} (\vb, \vb^\star) +  \ell_{\mathcal{D}}^{\textrm{partial}}(d, d^\star) +  \ell_{\Dc}^{\textrm{ineq}} (d, d^\star) 
\end{equation}
where $d, \vb$ are obtained by the simplified neural network $N_w^{\textrm{sketch}}$.

\subsection{The Sketchpose plugin}

A significant part of this work lies in the development of a user-friendly graphical interface to train and use the neural network. 
It is integrated in Napari \cite{chiu2022napari}, which is well suited to embedding the Python/PyTorch codes at the core of our approach.

Sketchpose can be easily installed through either the pip package manager or Napari's \cite{chiu2022napari} built-in interface. 
A \href{https://sketchpose-doc.readthedocs.io/en/latest/index.html}{detailed documentation} can be accessed by clicking on this hyperlink. 
It offers step-by-step instructions illustrated by short videos, to assist users in effectively testing all the capabilities of the plugin. 

The user directly draws a few strokes for the background, foreground and boundaries. The brush size can be adjusted similarly to usual paint software. An entire stroke boundary can be added to the boundary set by a double right-click. The training and drawing can be achieved in parallel to target the places where the segmentation is inaccurate.

The networks can be initialized by random weights or existing pre-trained weights.
The multi-threaded plugin's architecture makes it possible to annotate, train and observe the current segmentation results simultaneously. 
The user can annotate regions where the segmentation is inaccurate in priority, hence reducing the annotation time. 
The predictions can be restricted to a bounding box at each epoch of the training process, to reduce the processing time,
which is particularly helpful for large scale images.
Finally, users can work with a single image or a set of images for the inference and training steps.

\section{Experiments}

In this paragraph, we conduct several experiments to explore three distinct use cases of the method. 
\begin{itemize}
    \item Learning from a limited set of annotations on a single image with randomly initialized neural network weights. 
    \item Learning from a limited set of annotations on a single image, starting from a pre-trained neural network.
    \item Learning with randomly initialized weights using a large dataset with sparse annotations. We study the impact of the percentage of labeled pixels (10\%, 25\%, 50\% and 100\%) on the segmentation quality, when training on thousands of cells.
\end{itemize}

After describing the metrics used for validation, we will turn to the practical results. 
For all the experiments, we used a single Nvidia RTX5000 GPU with 16 Gb.

\subsection{Evaluation metrics}

To quantify the predictions quality, we enumerate the true positives (TP), the true negatives (TN) and the false positives (FP). A true positive is an object in the gold standard that can be matched to an object in the prediction with an Intersection over Union (IoU) criterion higher than a threshold $\tau$. We let $TP(\tau)$ denote the total number of true positives. 
The total number of estimated objects without matches is denoted $FP(\tau)$ (for false positives). 
The total number of gold standard objects without valid matches is denoted $FN(\tau)$ (for false negatives). 
Utilizing these values, we compute the object detection accuracy metric ($OA(\tau)$) \cite{caicedo} for each image using the formula:
\begin{equation*}
    OA(\tau)  = \frac{TP(\tau)}{TP(\tau) + FP(\tau) + FN(\tau)}.
\end{equation*}

The reported object dectection accuracy is then computed as the average over all images in the test set. 

Additionally, we computed the average DICE and the Aggregated Jaccard Index defined as follows:
\begin{align*}
J_{\text{aggregated}}(A,B) &= \frac{1}{N} \sum_{i=1}^{N} \frac{|A_i \cap B_i|}{|A_i \cup B_i|}, \\
\text{average DICE}(A,B) &= \frac{1}{N} \sum_{i=1}^{N} \frac{2 \cdot |A_i \cap B_i|}{|A_i| + |B_i|},
\end{align*}
where $A$ and $B$ are a dataset and its groundtruth.

\subsection{Training from scratch on a single image}

In this section, we will showcase several results achieved while training from scratch on a small set of images.
The tests are made on a variety of biological structures (dendritic cells, osteoclasts, bacteria, insect eggs, adipose tissue, artistic image of cells).

\subsubsection{Training details}

For this experiment, the model have been trained for 100 epochs ($\approx$ 2 minutes) with a batch size of 16 and image flips for data augmentation. 

\subsubsection{Staphylococcus aureus}

In the example of Figure \ref{fig:training_cells_colors}, we use a microscopy image of methicillin-resistant \textit{Staphylococcus aureus} (MRSA) infections, from European Commission, \textit{Horizon Magazine} (2020), \href{https://projects.research-and-innovation.ec.europa.eu/en/horizon-magazine/can-we-reverse-antibiotic-resistance}{``Can we reverse antibiotic resistance?''}. It is reused under the European Commission’s reuse policy.

\begin{figure*}[ht!]
  \centering
  \begin{subfigure}[t]{0.22\linewidth}
    \centering
    \includegraphics[width=\linewidth]{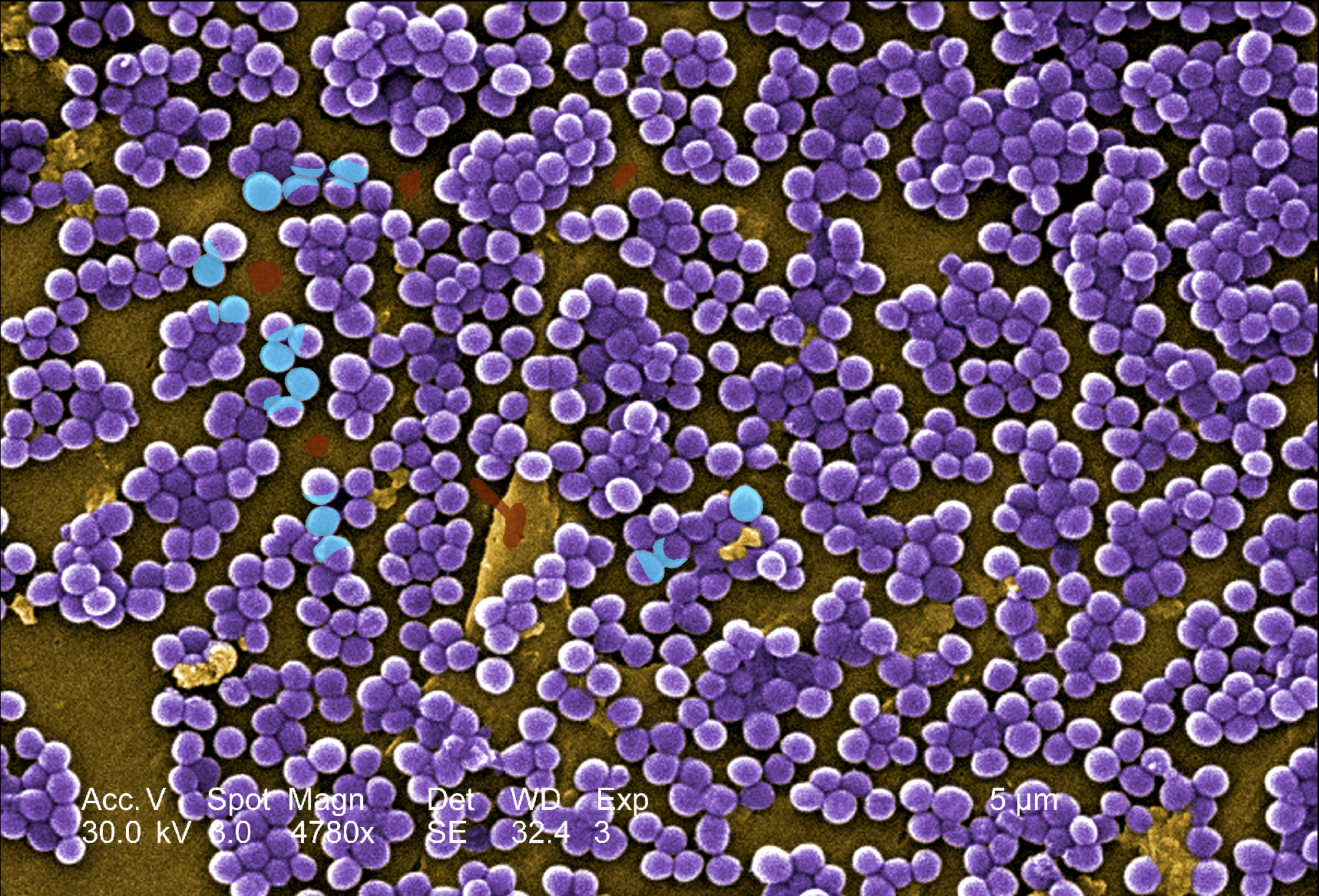}
    \caption{Sparse labels}
    \label{fig:image2}
  \end{subfigure}
  \hfill
  \begin{subfigure}[t]{0.22\linewidth}
    \centering
        \begin{tikzpicture}[spy using outlines={rectangle, color=red, magnification=3, size=1.25cm, connect spies}]
      \node[anchor=south west,inner sep=0] (image) at (0,0) {\includegraphics[width=\linewidth]{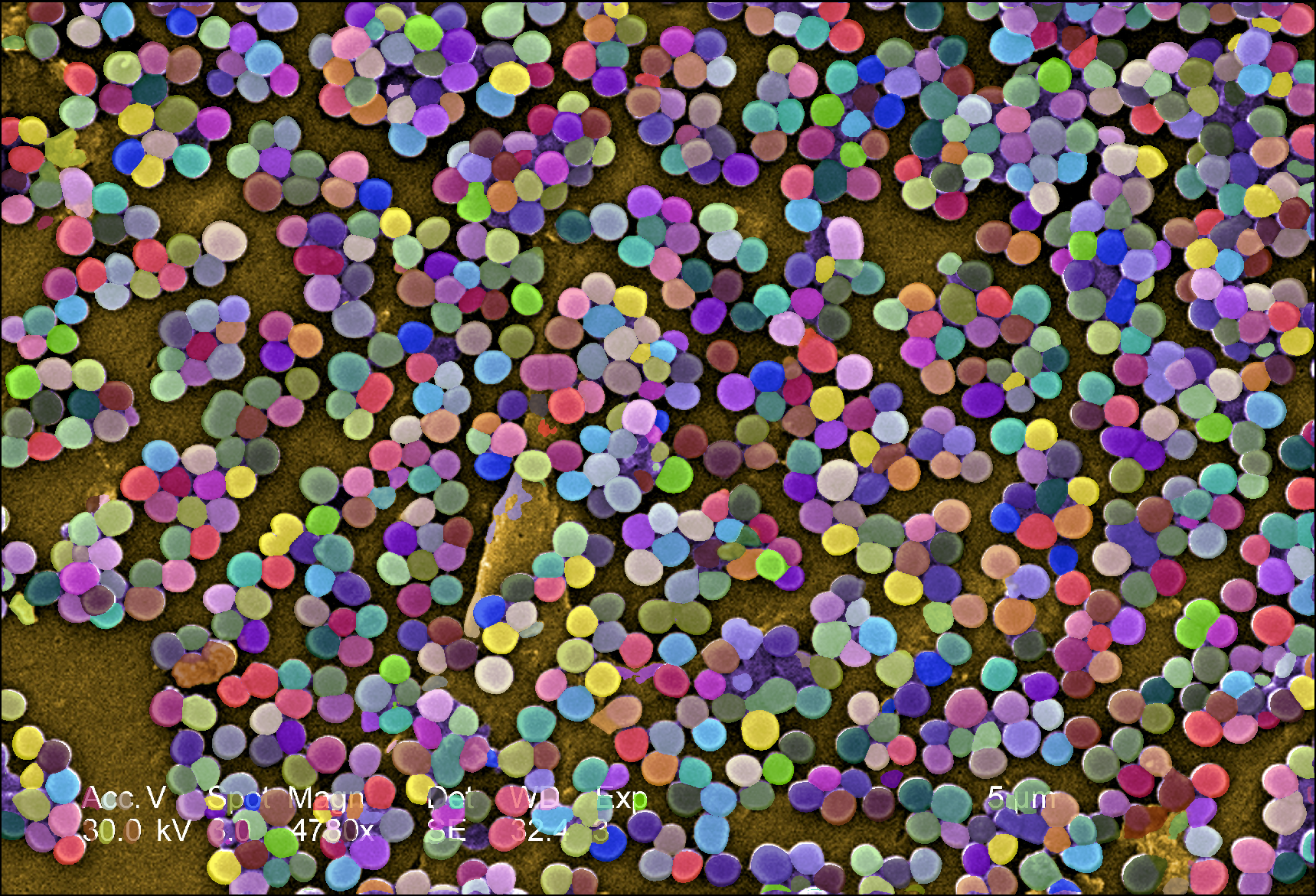}};
      \spy on (3.8,0.8) in node at (0.6,1.25);
    \end{tikzpicture}
    \caption{Omnipose cyto2 model}
    \label{fig:image3}
  \end{subfigure}
  \hfill
  \begin{subfigure}[t]{0.22\linewidth}
    \centering
        \begin{tikzpicture}[spy using outlines={rectangle, color=green, magnification=3, size=1.25cm, connect spies}]
      \node[anchor=south west,inner sep=0] (image) at (0,0) {\includegraphics[width=\linewidth]{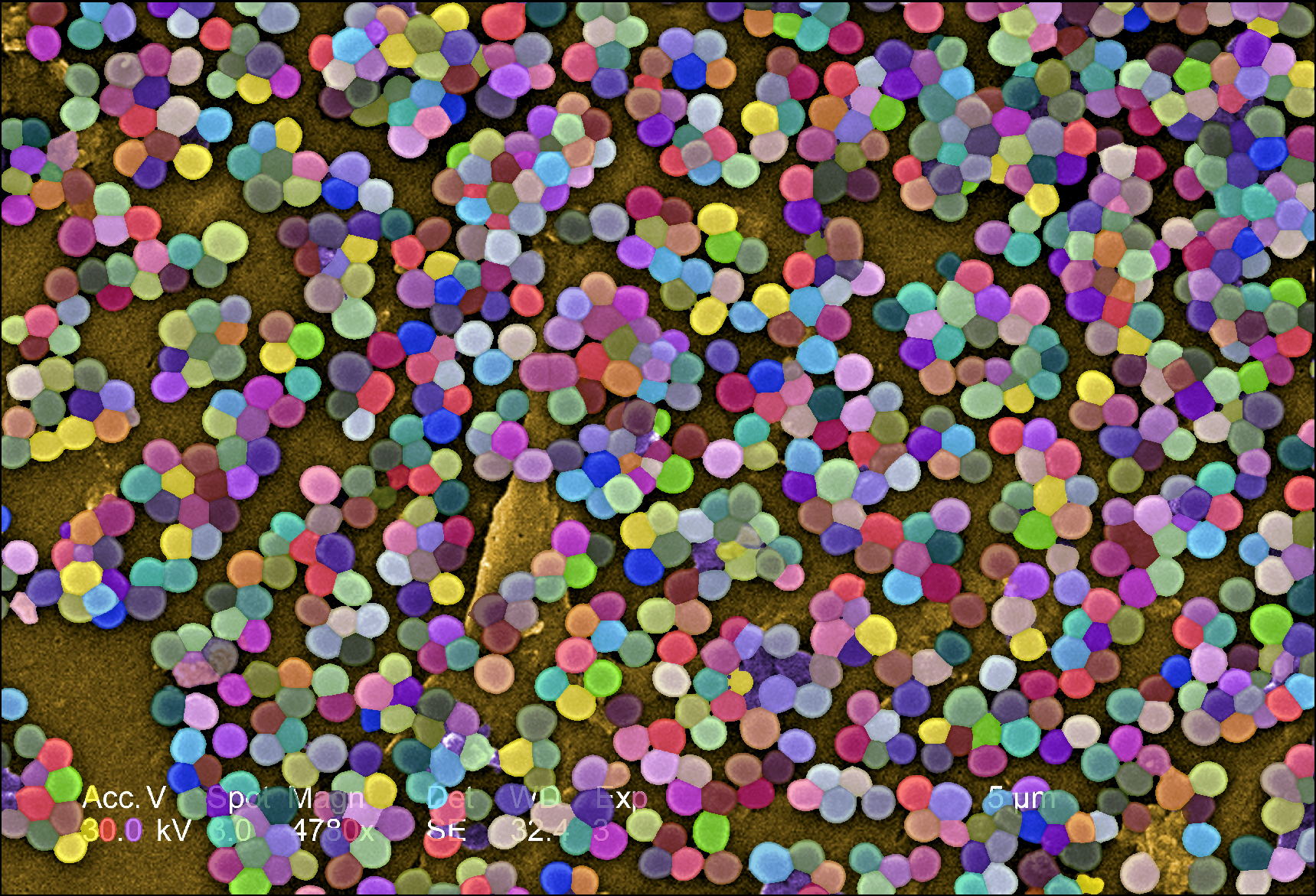}};
      \spy on (3.8,0.8) in node at (0.6,1.25);
    \end{tikzpicture}
    \caption{Sketchpose result, trained from scratch ($\approx$ 2')}
    \label{fig:image3}
  \end{subfigure}
  \hfill
  \begin{subfigure}[t]{0.22\linewidth}
    \centering
    \includegraphics[width=0.8\linewidth]{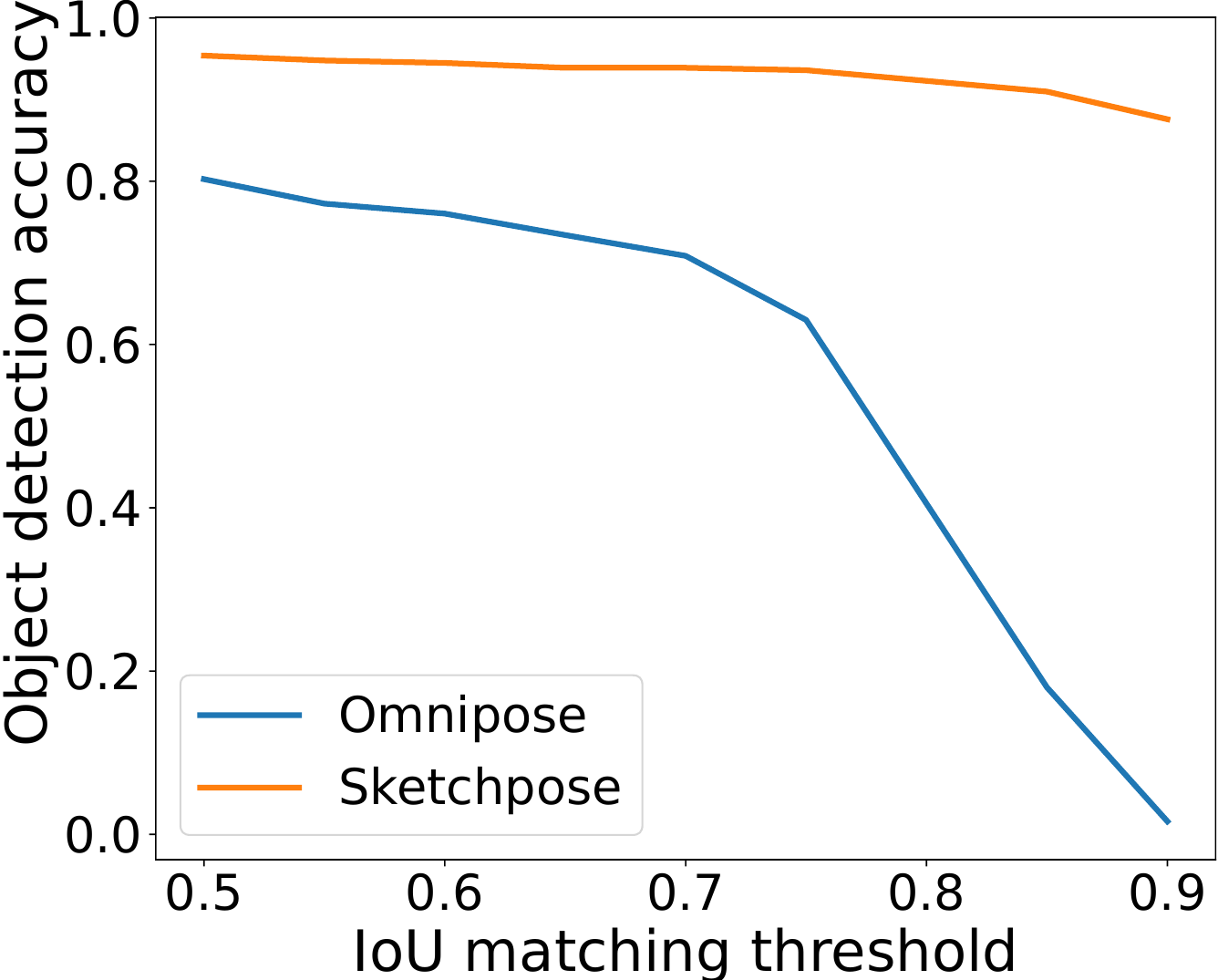}
    \caption{Evaluation of the segmentation quality}
    \label{fig:cells_color_acuracy}
  \end{subfigure}
  \caption{(a,b,c) A training from scratch with sparse labels (in blue on the left image). Image credit: Janice Carr, Jeff Hageman, USCDCP. (d) Evaluation of segmentation quality. Omnipose results are: DICE=0.92, Jaccard index=0.82. Sketchpose: DICE=0.99, Jaccard index=0.96.}
  \label{fig:training_cells_colors}
\end{figure*}

After drawing for less than one minute and training for 100 epochs ($\approx$ 2'), we achieve a much better result than the trained model of Omnipose (see Figure \ref{fig:training_cells_colors}). The quality metrics is shown in Figure \ref{fig:cells_color_acuracy}.

\subsubsection{Eggs on a tree leaf} \label{sect:eggs}

In this section, we picked an image from the Omnipose dataset, which likely represents eggs of an insect on a tree leaf. 
At first sight, the segmentation task is uneasy, since the objects are tightly connected, with identical textures and blurry boundaries.
We first annotated a subset of 5 eggs in Figure \ref{fig:egg_annot1} with a minimal amount of background. 
The segmentation result after training is already surprisingly good in Figure \ref{fig:egg_res1}, but some objects are not detected, and others are merged. 
We annotated 2 extra eggs in Figure \ref{fig:egg_annot2}. With this extra information, retraining the network now produces a near perfect segmentation mask, with a single error (2 pink eggs on the left). 
This experiment illustrates a unique feature of Sketchpose: it is possible to interactively annotate while training. 
This offers a possibility to label a minimum amount of regions to reach the desired output. 
This principle sometimes called ``active learning'' or ``human-in-the-loop'' \cite{budd2021survey} is significantly enhanced by using partial annotations and the user-friendly Napari interface.

\begin{figure*}[ht!]
  \centering
  \begin{subfigure}[t]{0.195\linewidth}
    \centering
    \includegraphics[width=\linewidth]{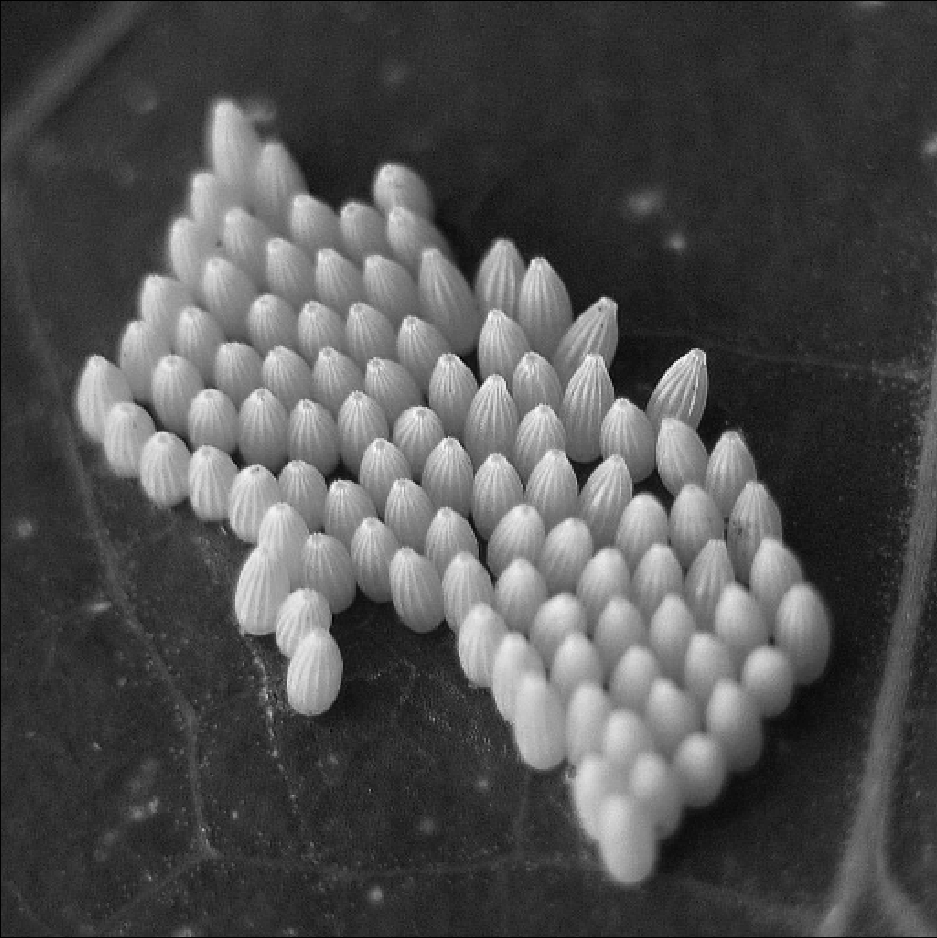}
    \caption{Image}
    \label{fig:egg_raw}
  \end{subfigure}
  \hfill
  \begin{subfigure}[t]{0.195\linewidth}
    \centering
    \includegraphics[width=\linewidth]{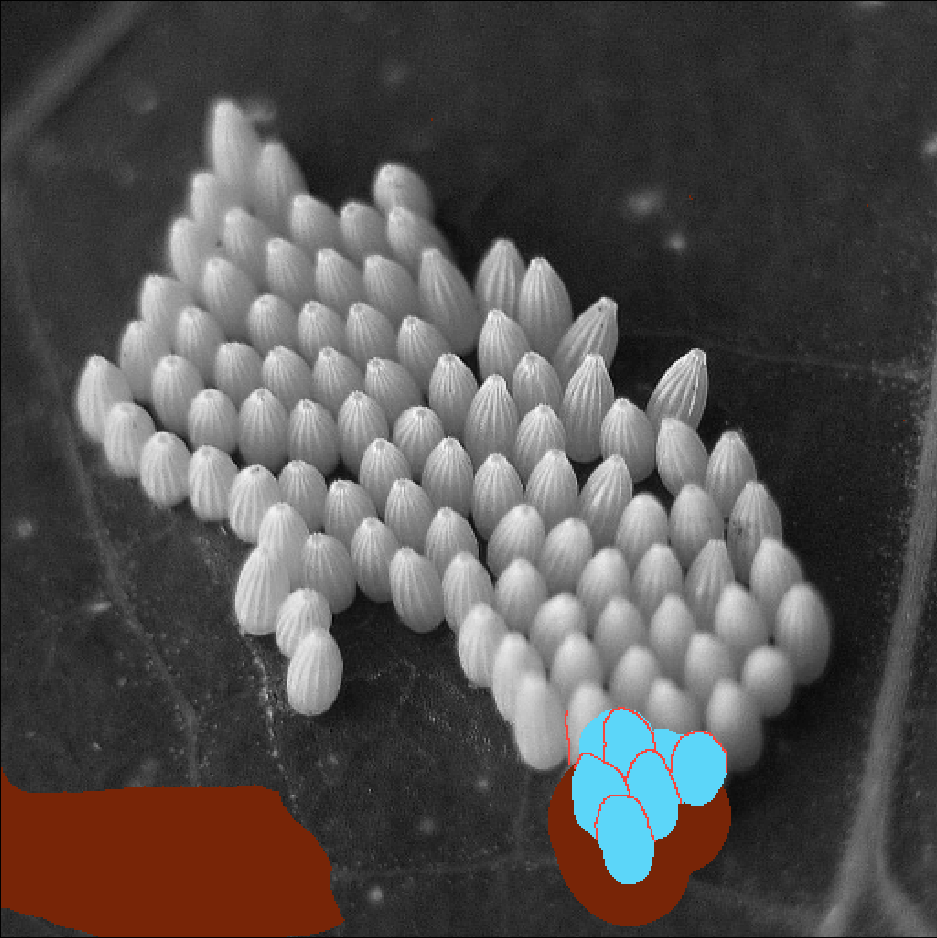}
    \caption{Labeled Set 1 (LS1)}
    \label{fig:egg_annot1}
  \end{subfigure}
  \hfill
  \begin{subfigure}[t]{0.195\linewidth}
    \centering
    \includegraphics[width=\linewidth]{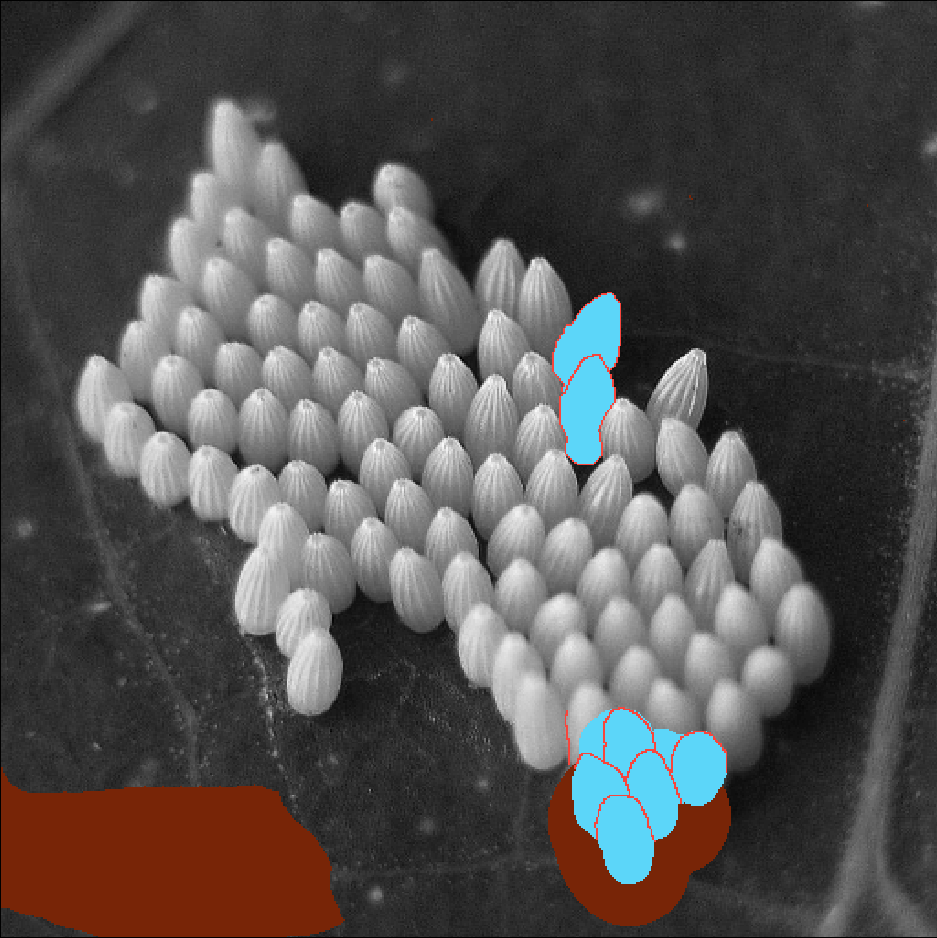}
    \caption{Labeled Set 2 (LS2)}
    \label{fig:egg_annot2}
  \end{subfigure}
  \hfill
  \begin{subfigure}[t]{0.195\linewidth}
    \centering
    \includegraphics[width=\linewidth]{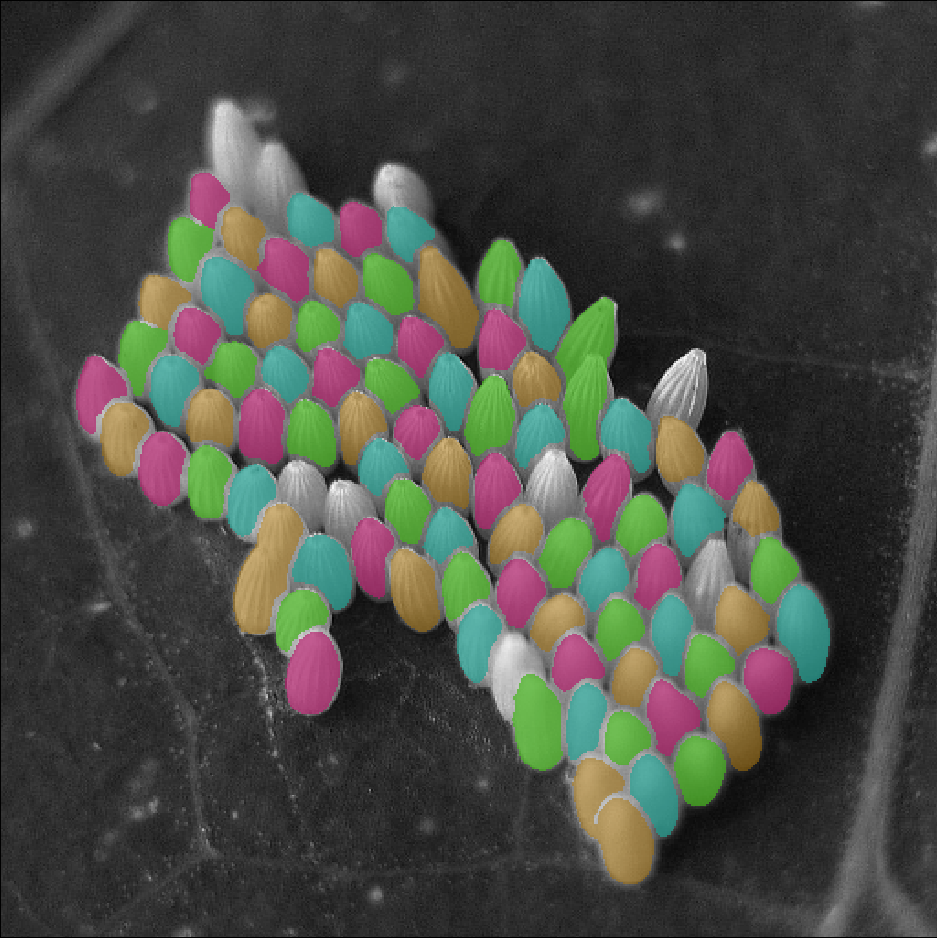}
    \caption{Sketchpose LS1 $\approx$ 10'}
    \label{fig:egg_res1}
  \end{subfigure}
  \hfill
  \begin{subfigure}[t]{0.195\linewidth}
    \centering
    \includegraphics[width=\linewidth]{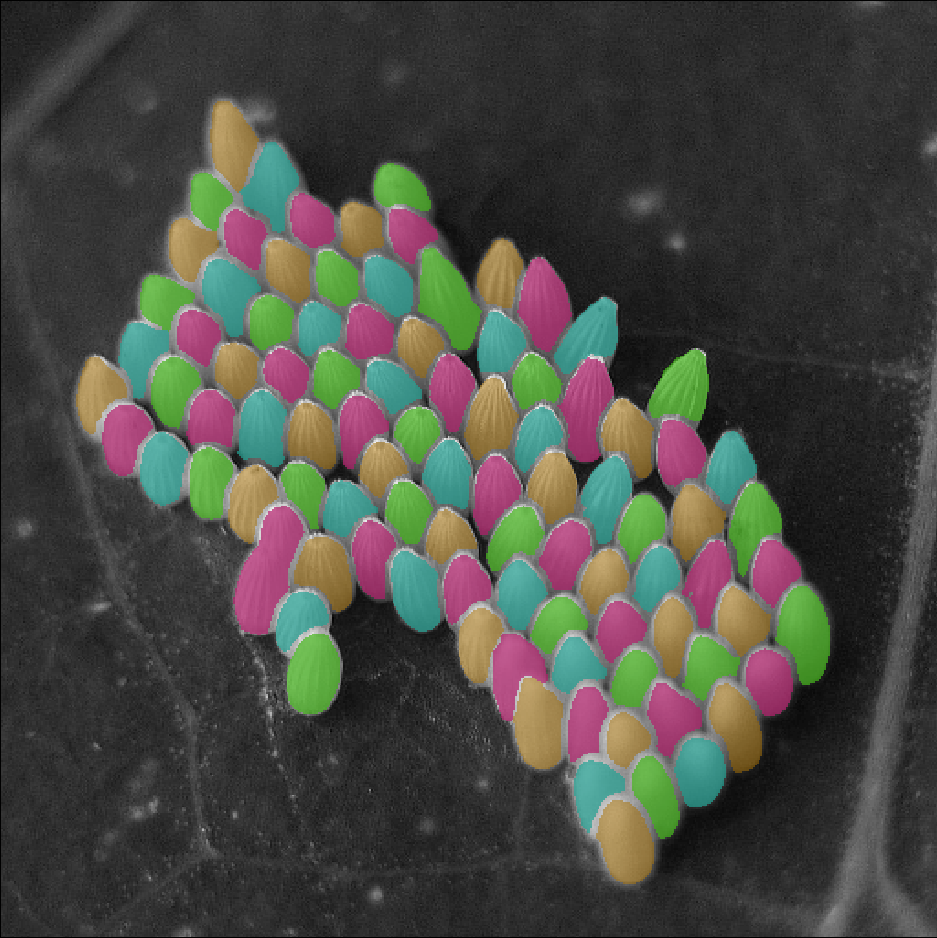}
    \caption{Sketchpose LS2 $\approx$ 3'}
    \label{fig:egg_res2}
  \end{subfigure}
  \caption{Progressive training in Sketchpose. In this example, we show that it is possible to improve the segmentation performance of Sketchpose by progressively annotating at places where the network failed. Here, a quite minimal annotation set is enough to near perfectly separate the eggs on the leaf.}
  \label{fig:training_eggs}
\end{figure*}

\subsection{Transfer learning on a single image}
In this section, we explore the feasibility of improving pre-trained weights using transfer learning.

\subsubsection{Training details}
As for the previous experiment, the model have been trained for 100 epochs ($\approx$ 2 minutes) with a batch size of 16 and image flips for data augmentation.

\subsubsection{Bacteria segmentation}

Bacteria are often used as biological models (e.g. in DNA studies). A precise segmentation can be difficult to achieve, because they have elongated shapes and can be clustered. 

The Omnipose~\cite{cutler2022omnipose} model was initially conceived to address the shortcomings of Cellpose for this task. 

Figure \ref{fig:training_bact} shows how transfer learning with sparse annotations can improve the Omnipose results by separating touching bacterias. Figure \ref{fig:bact_acuracy} shows a quantitative comparison of both methods. As can be seen, Sketchpose's adapted weights provide much higher performance. 
A visual inspection indicates that all objects have been correctly separated, apart from the cluster touching the boundary on the bottom left.

\begin{figure*}[ht!]
  \centering
  \begin{subfigure}[t]{0.225\linewidth}
    \centering
    \begin{tikzpicture}[spy using outlines={rectangle, color=red, magnification=3, size=1.3cm, connect spies}]
  \node[anchor=south west, inner sep=0] (image) at (0,0) {
    \includegraphics[clip, trim={0pt 50 0 0}, height=\linewidth, angle=90]{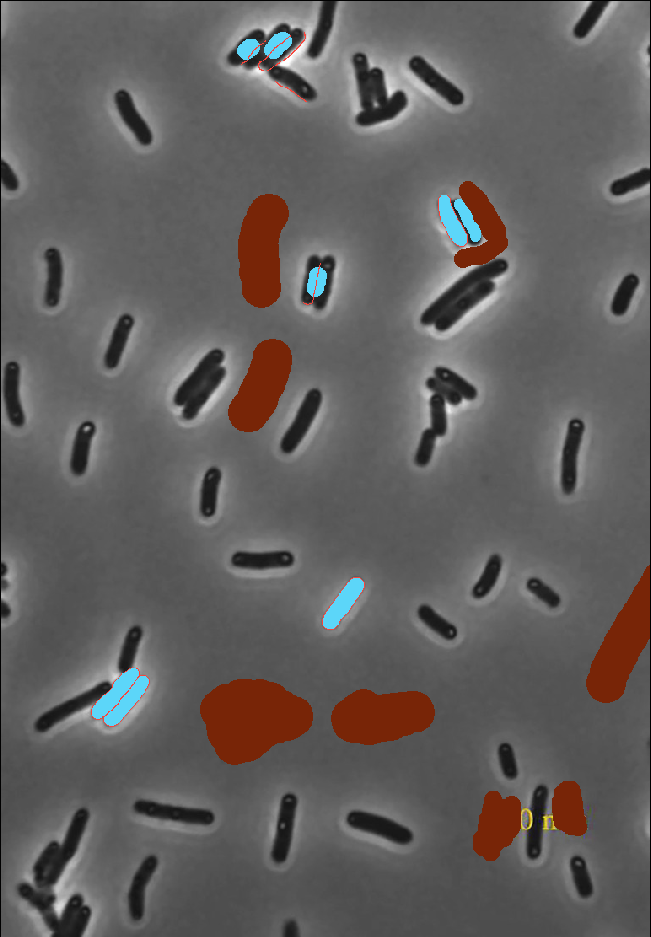}
  };
  \spy on (0.42,1.79) in node at (3.65,1.5);
\end{tikzpicture}
    \caption{Sparse labels}
    \label{fig:subfig1}
  \end{subfigure}
  \begin{subfigure}[t]{0.225\linewidth}
    \centering
    \begin{tikzpicture}[spy using outlines={rectangle, color=red, magnification=3, size=1.3cm, connect spies}]
      \node[anchor=south west,inner sep=0] (image) at (0,0) {\includegraphics[clip, trim={0pt 50 0 0}, height=\linewidth, angle=90]{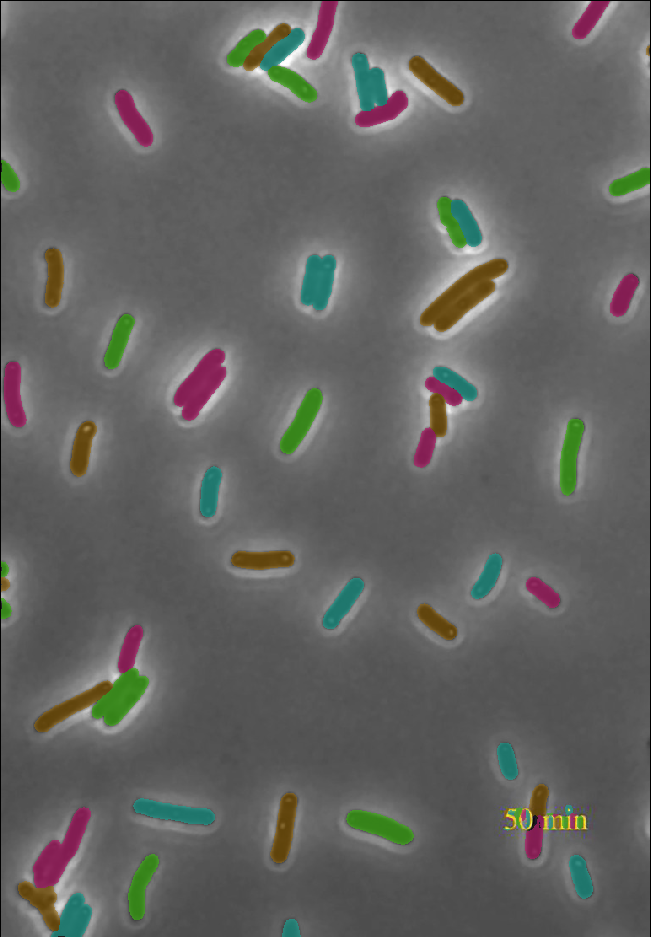}};
      \spy on (0.42,1.79) in node at (3.65,1.5);
    \end{tikzpicture}
    \caption{Omnipose bactphase}
    \label{fig:subfig2}
  \end{subfigure}
  \begin{subfigure}[t]{0.225\linewidth}
    \centering
    \begin{tikzpicture}[spy using outlines={rectangle, color=green, magnification=3, size=1.3cm, connect spies}]
      \node[anchor=south west,inner sep=0] (image) at (0,0) {\includegraphics[clip, trim={0pt 50 0 0}, height=\linewidth, angle=90]{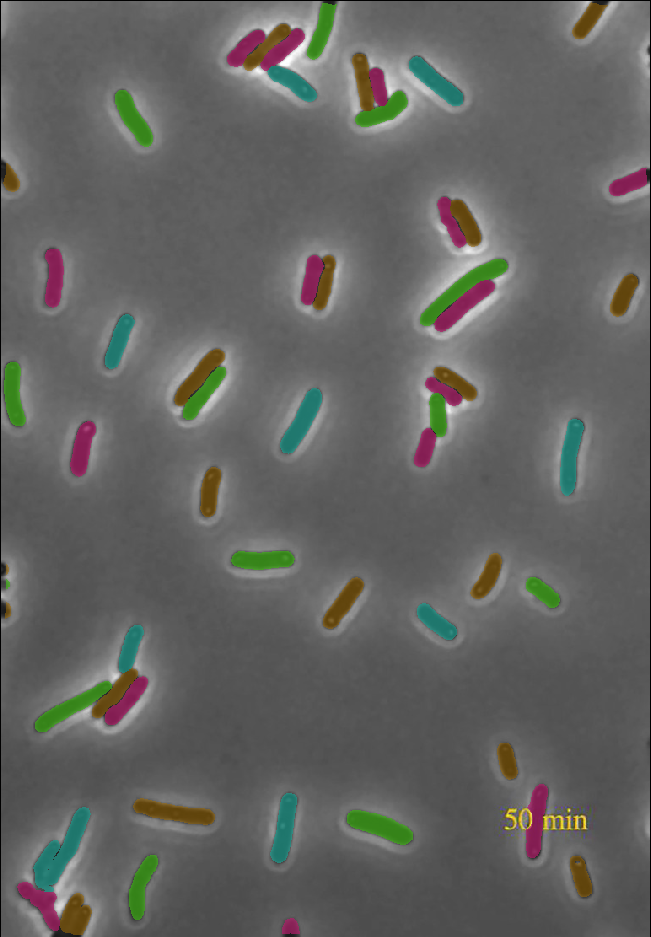}};
      \spy on (0.42,1.79) in node at (3.65,1.5);
    \end{tikzpicture}
    \caption{Sketchpose ($\approx$ 1')\label{fig:adipo_acuracy_c}}
    \label{fig:subfig3}
  \end{subfigure}
  \begin{subfigure}[t]{0.225\linewidth}
    \centering
    \includegraphics[width=1\linewidth]{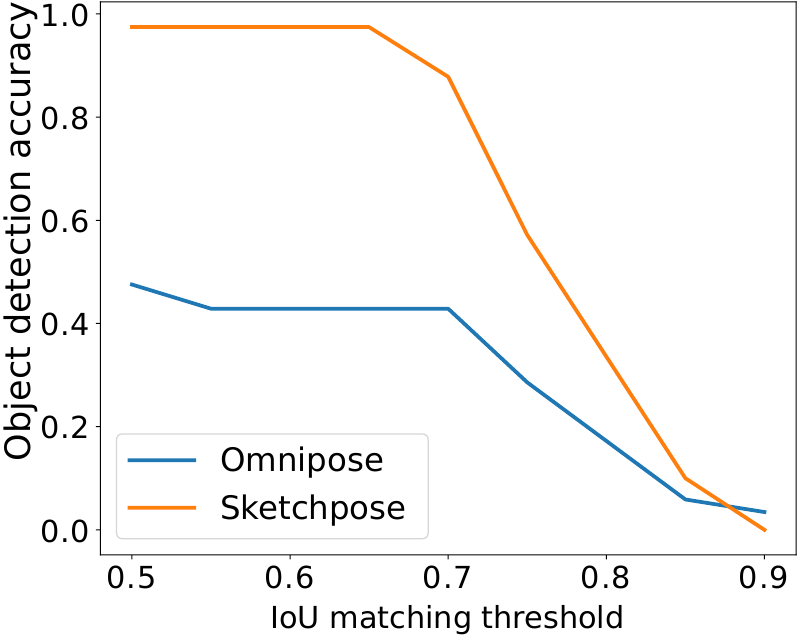}
    \caption{Quality evaluation}
    \label{fig:bact_acuracy}
  \end{subfigure}

\begin{subfigure}[c]{0.225\linewidth}
    \centering
    \begin{tikzpicture}[spy using outlines={rectangle, color=red, magnification=3, size=1.3cm, connect spies}]
      \node[anchor=south west,inner sep=0] (image) at (0,0) {\includegraphics[width=\linewidth]{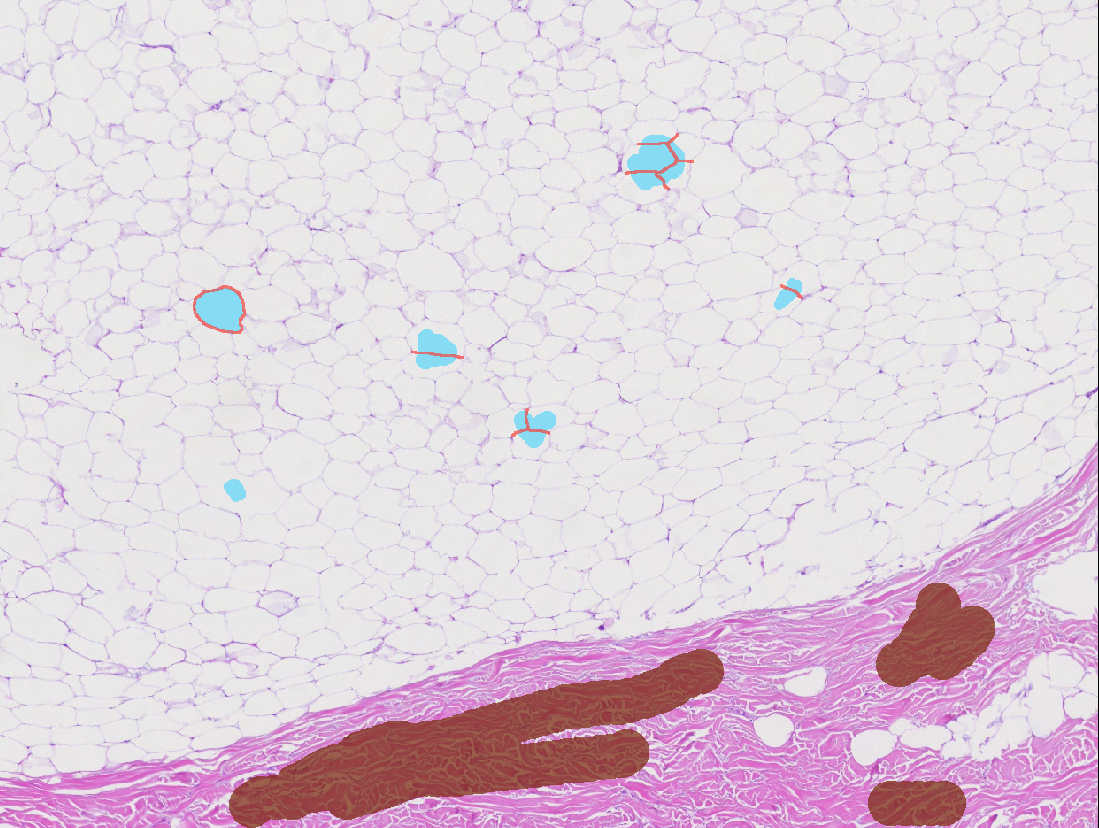}};
      \spy on (0.4,0.5) in node at (3.3, 2.3);
    \end{tikzpicture}
    \caption{Sparse labels (cropped)\label{fig:adipo_acuracy_e}}
    \label{fig:subfig2}
  \end{subfigure}
  \begin{subfigure}[c]{0.225\linewidth}
    \centering
    \begin{tikzpicture}[spy using outlines={rectangle, color=red, magnification=3, size=1.3cm, connect spies}]
      \node[anchor=south west,inner sep=0] (image) at (0,0) {\includegraphics[width=\linewidth]{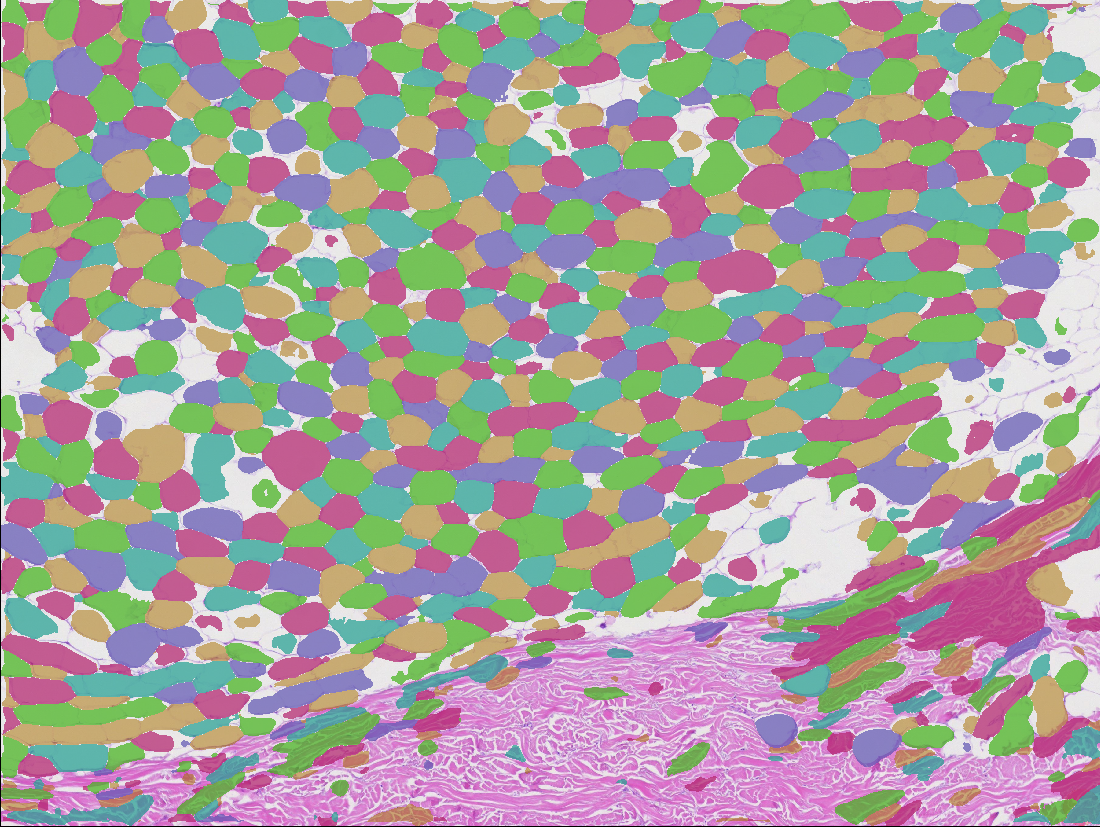}};
      \spy on (0.4,0.5) in node at (3.3, 2.3);
    \end{tikzpicture}
    \caption{Omnipose cyto2}
    \label{fig:subfig2}
  \end{subfigure}
  \begin{subfigure}[c]{0.225\linewidth}
    \centering
    \begin{tikzpicture}[spy using outlines={rectangle, color=green, magnification=3, size=1.3cm, connect spies}]
      \node[anchor=south west,inner sep=0] (image) at (0,0) {\includegraphics[width=\linewidth]{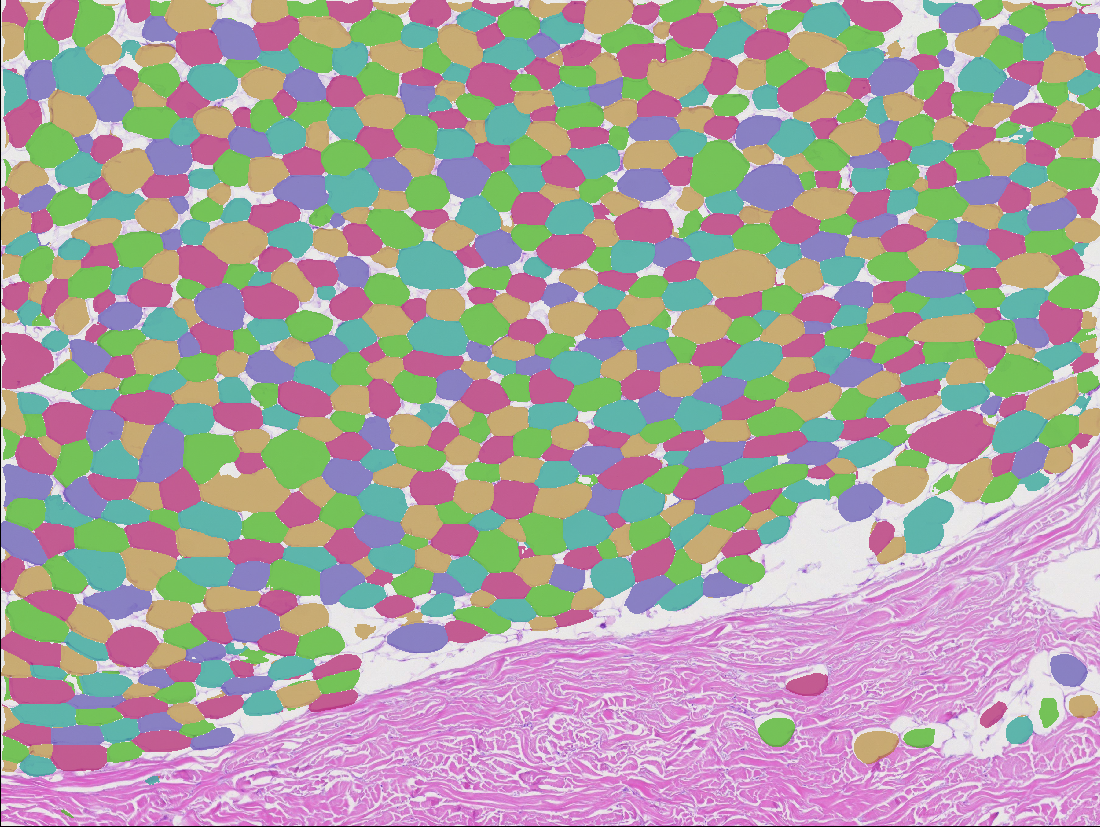}};
      \spy on (0.4,0.5) in node at (3.3, 2.3);
    \end{tikzpicture}
    \caption{Transfer learning $\approx$ 2'}
    \label{fig:subfig3}
  \end{subfigure}
  \begin{subfigure}[c]{0.225\linewidth}
    \centering
    \includegraphics[width=\linewidth]{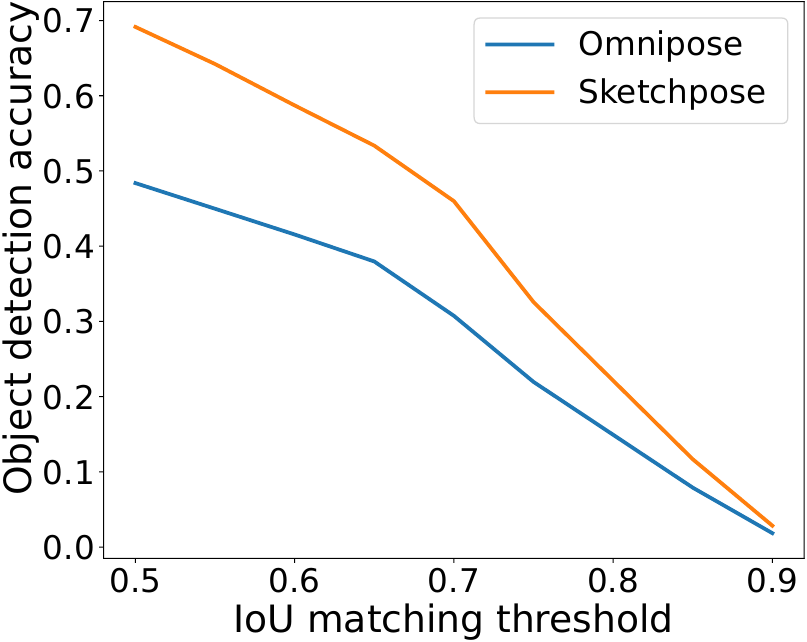}
    \caption{Quality evaluation}
  \end{subfigure}

  \begin{subfigure}[t]{0.225\linewidth}
    \centering
        \begin{tikzpicture}[spy using outlines={rectangle, color=red, magnification=3, size=1.5cm, connect spies}]
      \node[anchor=south west,inner sep=0] (image) at (0,0) {\includegraphics[width=\linewidth]{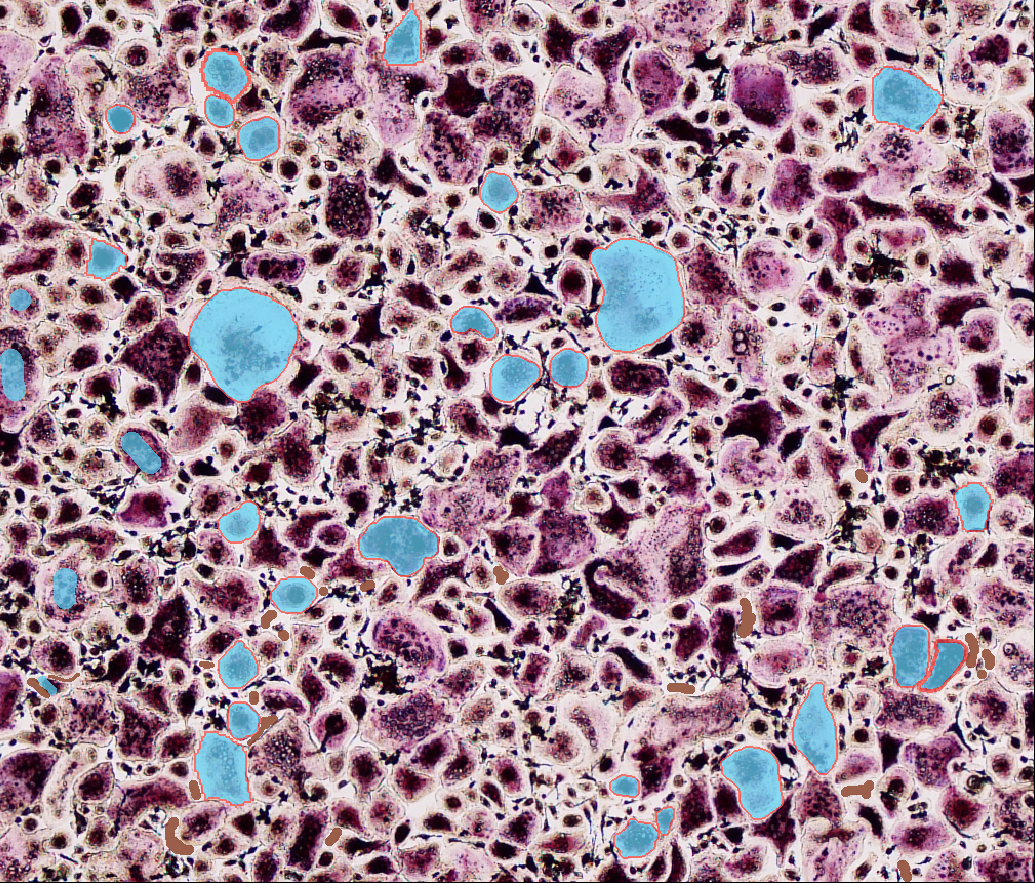}};
      \spy on (2,1) in node at (3.1,2.5);
    \end{tikzpicture}
    \caption{Sparse labels}
    \label{fig:image2}
  \end{subfigure}
    \begin{subfigure}[t]{0.225\linewidth}
    \centering
    \begin{tikzpicture}[spy using outlines={rectangle, color=red, magnification=3, size=1.5cm, connect spies}]
      \node[anchor=south west,inner sep=0] (image) at (0,0) {\includegraphics[width=\linewidth]{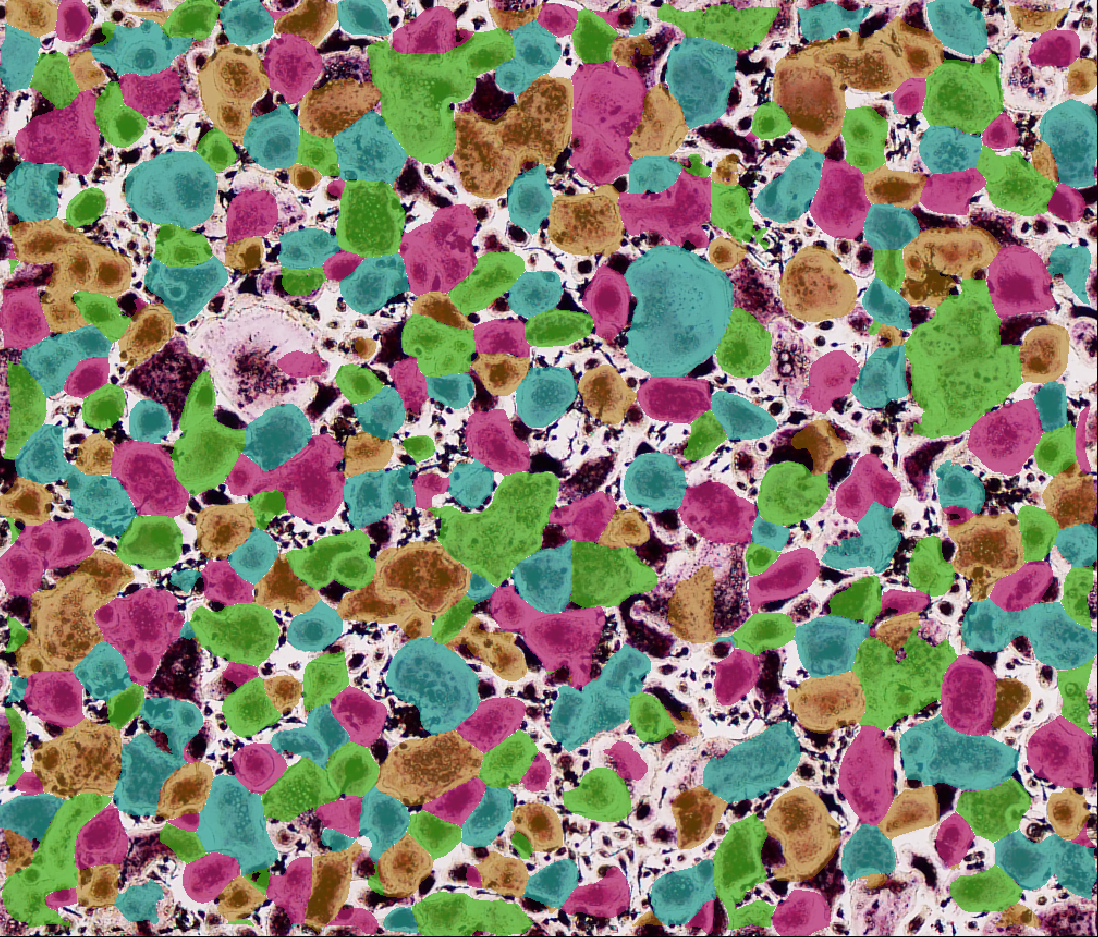}};
      \spy on (2,1) in node at (3.1,2.5);
    \end{tikzpicture}
    \caption{Omnipose cyto2}
    \label{fig:subfig2}
  \end{subfigure}
  \begin{subfigure}[t]{0.225\linewidth}
    \centering
    \begin{tikzpicture}[spy using outlines={rectangle, color=green, magnification=3, size=1.5cm, connect spies}]
      \node[anchor=south west,inner sep=0] (image) at (0,0) {\includegraphics[width=\linewidth]{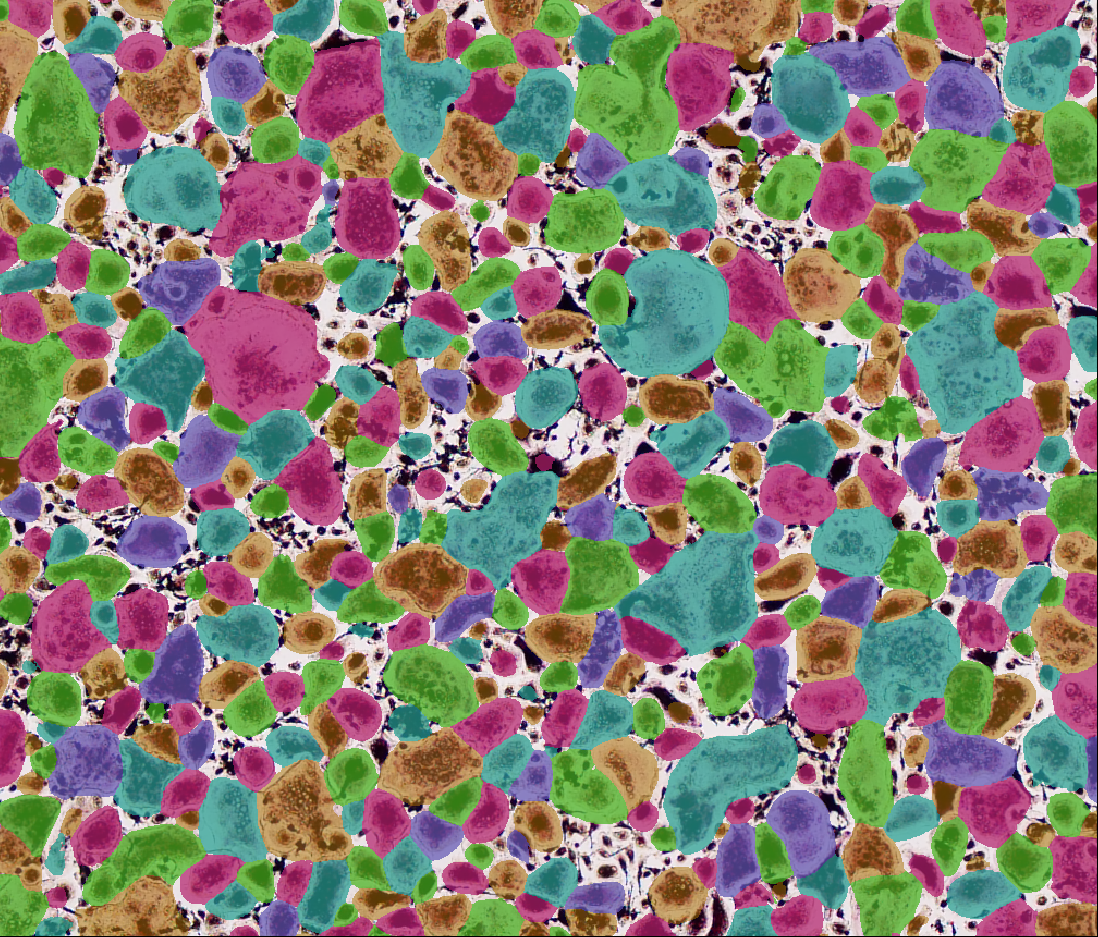}};
      \spy on (2,1) in node at (3.1,2.5);
    \end{tikzpicture}
    \caption{Transfer learning $\approx$ 5'}
    \label{fig:subfig2}
  \end{subfigure}
  \begin{subfigure}[t]{0.225\linewidth}
    \centering
  	\includegraphics[width=\linewidth]{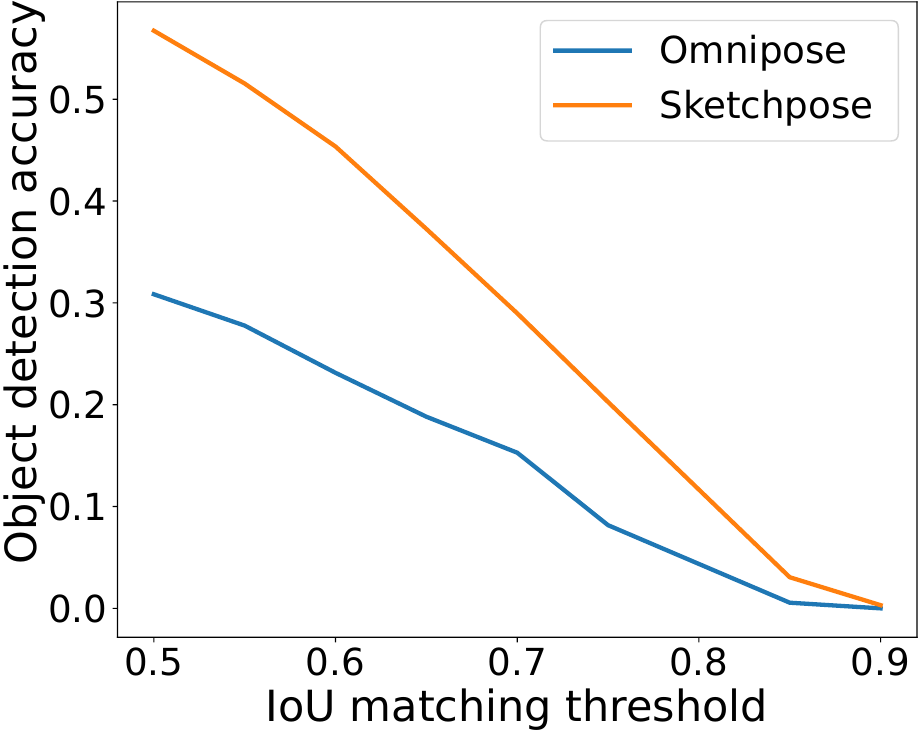}
    \caption{Quality evaluation}
    \label{fig:abs_acuracy}
  \end{subfigure}

  \caption{
Transfer learning experiment. In all cases, just a few strokes are enough to significantly improve the segmentation quality.     
\textbf{(a–d) Bacteria.} 
(a–c) Training with a few sparse labels. 
(d) Evaluation of segmentation quality. 
Omnipose: DICE = 0.81, Jaccard = 0.53. 
Sketchpose: DICE = 0.90, Jaccard = 0.72.
\label{fig:abs}
\textbf{(e–h) Adipocytes.} 
(e–g) Transfer learning from the Omnipose \texttt{bact\_phase} model. 
(h) Evaluation of segmentation quality. 
Omnipose: DICE = 0.89, Jaccard = 0.69. 
Sketchpose: DICE = 0.89, Jaccard = 0.79. 
Image adapted from \cite{ZHU2021102348}.
\label{fig:training_bact}
\textbf{(i–l) Osteoclasts.} 
(i–k) Transfer learning from the Omnipose \texttt{cyto2} model. 
(l) Evaluation of segmentation quality. 
Omnipose: DICE = 0.90, Jaccard = 0.68. 
Sketchpose: DICE = 0.95, Jaccard = 0.80.
\label{fig:adipo_acuracy}
}
\end{figure*}

\subsubsection{Adipocytes segmentation}

 The image in Figure \ref{fig:adipo_acuracy_e} shows a crop of a very large image of a skin explant provided by \href{https://www.diva-expertise.com/fr/}{DIVA Expertise}. 
 One can see a part of the dermis (in pink) and above it, adipose tissue (large white circular cells). Adipose tissue is the third skin layer after the epidermis and dermis, also known as the hypodermis. 
 Hypodermal cells (adipocytes) secrete specific molecules (e.g. adiponectin, leptin) which have a direct impact on the biology of fibroblasts present in the dermis, and also on keratinocytes present in the epidermis. 
 They are the subject of numerous studies (see \cite{bourdens2019short} and \cite{sadick2015facial} for instance). For most of the studies where skin explants are imaged, we first need to count the adipocytes number in the image, and remove any potential outliers detected in the dermis and epidermis.

 While Omnipose cyto2 results in some undersegmentation for this task, the adapted weights provided by Sketchpose yields significantly enhanced results. Annotating 6 cells and a training for 100 epochs (less than 1 minute) were sufficient to significantly improve the quality of the segmentation and to remove the outliers from the dermis (see Figure \ref{fig:adipo_acuracy_c}). Figure \ref{fig:adipo_acuracy} shows a quantitative comparison between Omnipose and Sketchpose on this example.

\subsubsection{Osteoclasts segmentation}

Osteoclasts are responsible for bone resorption, and are widely studied (see \cite{labour2016tgfbeta1} for instance) as being responsible for certain pathologies such as osteoporosis when dysfunctional. Their differentiation goes through several stages, culminating in the activated osteoclast. The latter is generally large and contains numerous nuclei. \href{http://www.atlantic-bone-screen.com/language/en/}{Atlantic Bone Screen} (ABS) company is investigating the effect of different drugs in inducing either proliferation or cell death in these activated osteoclasts, in order to regulate their population. To do so, they extract osteoclasts from biopsies, culture them, apply the drugs and image them under a bright-field microscope.

The studied image is a crop of an image containing around 20,000 cells. We can see touching cells presenting a great variety in size, shape color. 
The image is complex to segment and poses a real challenge. What is more, ABS does not want to count pre-osteoclasts (small black nuclei), but only the mature cells (according to specific nuclei criteria). Each study comprises around sixty images, hence manual counting task performed at ABS is costly and laborious.

In Figure \ref{fig:abs}, we present a qualitative depiction that underscores the enhancement in segmentation accuracy attained through transfer learning with just a few labels. 
Labeling required approximately 2 minutes, while the training process took about 5 minutes. A quantitative comparison is available in Figure \ref{fig:abs_acuracy}.

\subsection{Training from scratch on large datasets}

The aim of this experiment is to highlight the possibility to train our model on large datasets with sparse annotations.
We use two different datasets as illustrated in Figure \ref{fig:dataset_examples}.

\begin{figure*}[ht]
  \centering
  \begin{subfigure}[t]{0.19\linewidth}
    \centering
    \includegraphics[width=\linewidth]{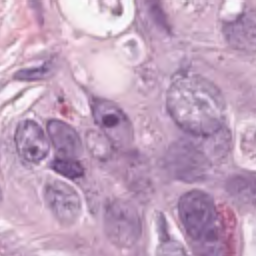}
    \label{fig:lmnet100}
  \end{subfigure}
  \hfill
  \begin{subfigure}[t]{0.19\linewidth}
    \centering
    \includegraphics[width=\linewidth]{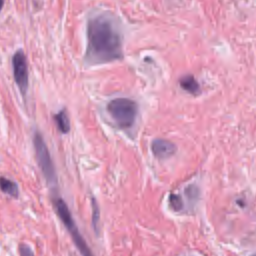}
    \label{fig:lmnet50}
  \end{subfigure}
  \hfill
  \begin{subfigure}[t]{0.19\linewidth}
    \centering
    \includegraphics[width=\linewidth]{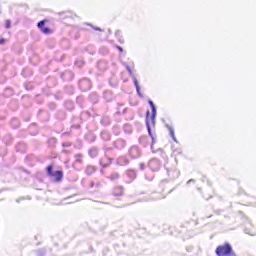}
    \label{fig:lmnet25}
  \end{subfigure}
  \hfill
  \begin{subfigure}[t]{0.19\linewidth}
    \centering
    \includegraphics[width=\linewidth]{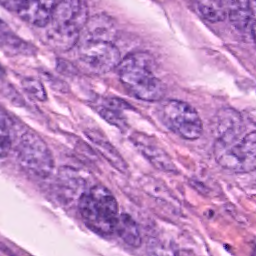}
    \label{fig:lmnet25}
  \end{subfigure}
  \hfill
  \begin{subfigure}[t]{0.19\linewidth}
    \centering
    \includegraphics[width=\linewidth]{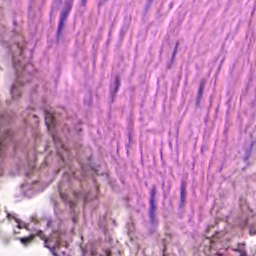}
    \label{fig:lmnet25}
  \end{subfigure}
  \hfill
  \begin{subfigure}[t]{0.19\linewidth}
    \centering
    \includegraphics[width=\linewidth]{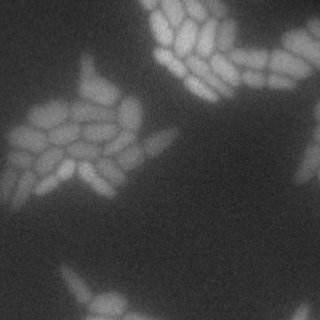}
    \label{fig:lmnet10}
  \end{subfigure}
  \hfill
    \begin{subfigure}[t]{0.19\linewidth}
    \centering
    \includegraphics[width=\linewidth]{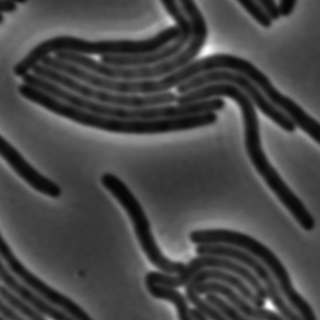}
    \label{fig:lmnet10}
  \end{subfigure}
  \hfill
    \begin{subfigure}[t]{0.19\linewidth}
    \centering
    \includegraphics[width=\linewidth]{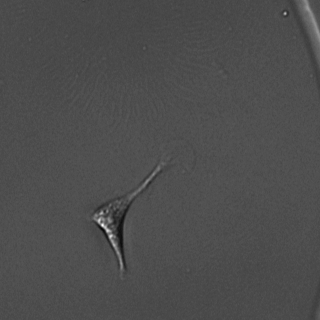}
    \label{fig:lmnet10}
  \end{subfigure}
  \hfill
  \begin{subfigure}[t]{0.19\linewidth}
    \centering
    \includegraphics[width=\linewidth]{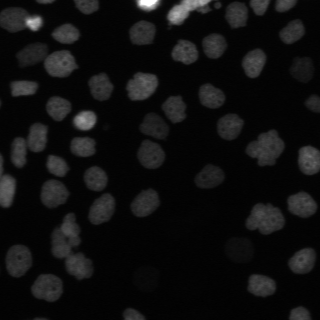}
    \label{fig:lmnet10}
  \end{subfigure}
  \hfill
  \begin{subfigure}[t]{0.19\linewidth}
    \centering
    \includegraphics[width=\linewidth]{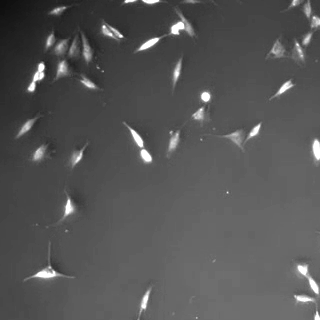}
    \label{fig:lmnet10}
  \end{subfigure}
  \caption{Top: examples of PanNuke images. Bottom: examples of MicrobeSeg images.}
  \label{fig:dataset_examples}
\end{figure*}



\paragraph{Microbeseg dataset}

The \textit{Microbeseg} dataset~\cite{microbeseg2022} contains 826 fluorescent microscopy images  of bacteria with about 30{,}000 manually annotated objects). 
It contains a mix of datasets from Omnipose and the Cell Tracking Challenge. It is publicly available \href{https://zenodo.org/records/7221152}{here}.

\paragraph{PanNuke dataset}

The PanNuke dataset~\cite{gamper2019pannuke} contains 7{,}904 image tiles of histopathology slides stained with H\&E across 19 tissue types, each with nuclear instance segmentations and five-class nuclear type annotations. It is publicly available from \href{https://www.kaggle.com/datasets/jgamper/pannuke}{Kaggle} and is widely used for benchmarking nucleus segmentation and classification algorithms.



\subsubsection{Selecting annotation subsets}

In this section, we investigate the model robustness across various annotation levels each characterized by a different percentage of annotated pixels: 10\%, 25\%, 50\%, and 100\%. 
We generate randomly binary masks by thresholding white Gaussian noise with a Gaussian filter of variance $\sigma^2$. The resulting Gaussian process is then thresholded to keep only a given proportion of pixels.

While the model is stochastic in nature, the generated data is created once and for all, enabling its deterministic reuse across multiple training sessions. 
Figure \ref{fig:training_image} shows an image accompanied by four corresponding label masks illustrating decreasing levels of annotation sparsity.

\begin{figure*}[ht]
  \centering
  \begin{subfigure}[t]{0.19\linewidth}
    \centering
    \includegraphics[width=\linewidth]{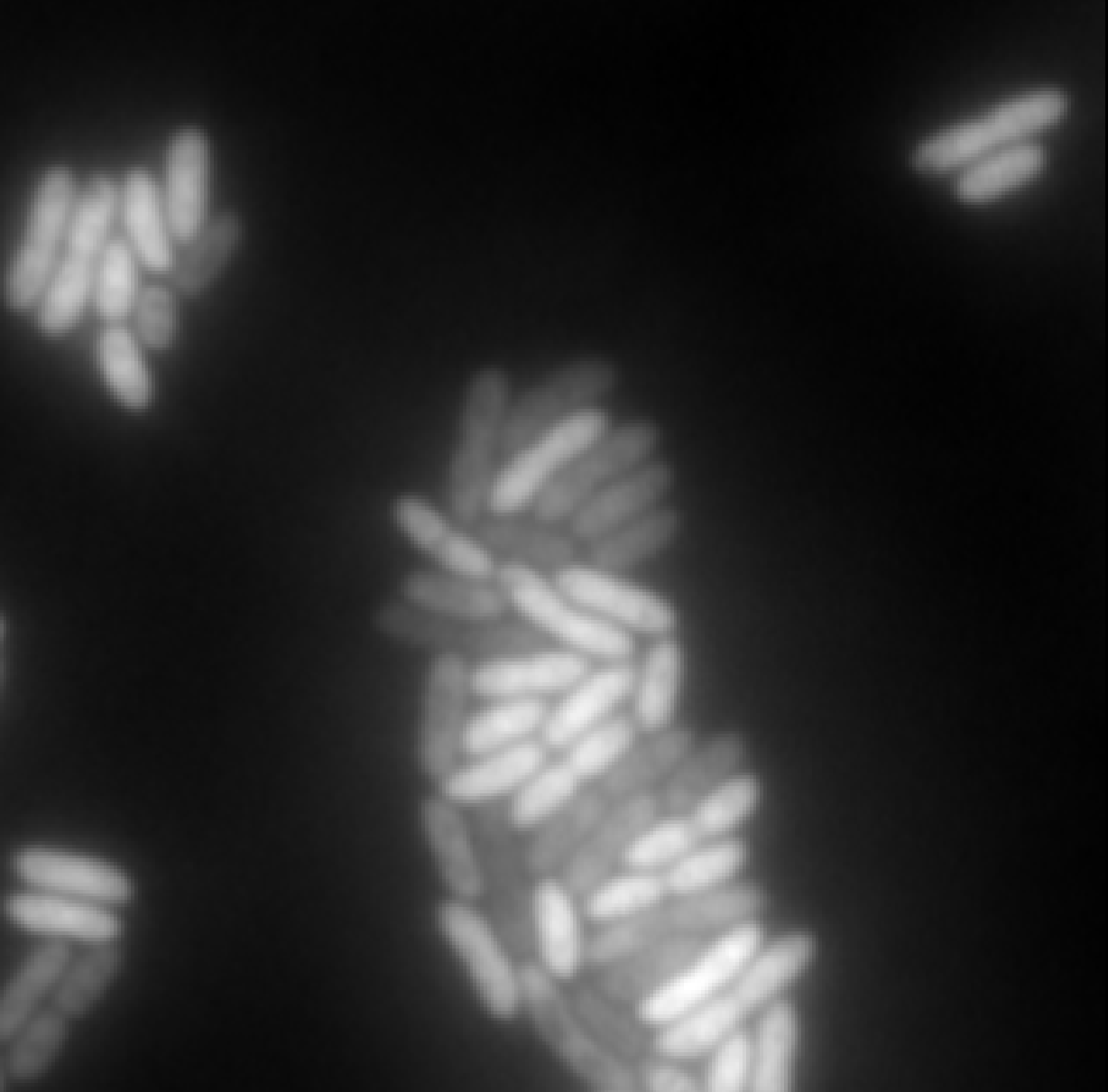}
    \caption{A raw image.}
    \label{fig:image1}
  \end{subfigure}
  \begin{subfigure}[t]{0.19\linewidth}
    \centering
    \includegraphics[width=\linewidth]{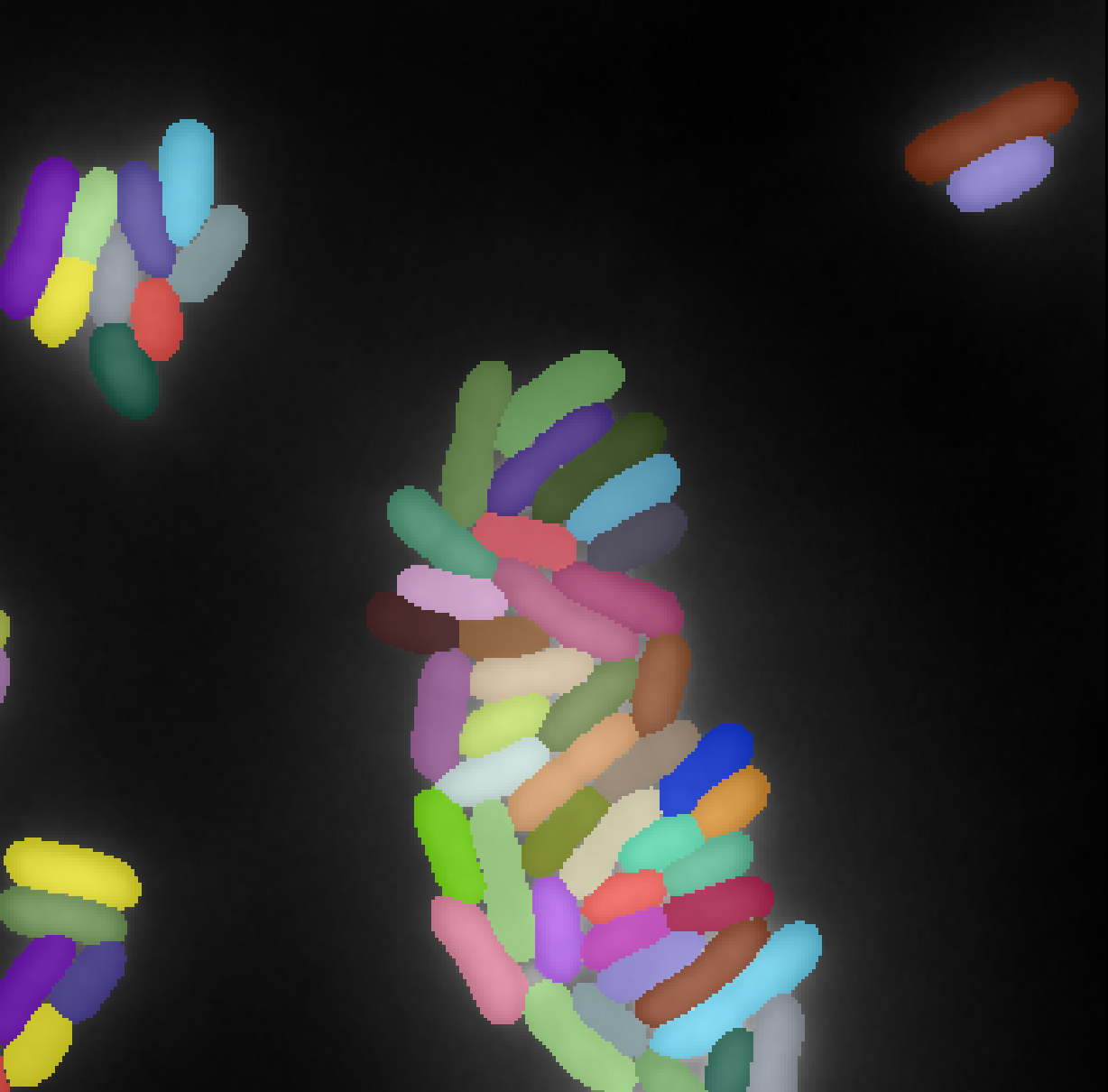}
    \caption{Fully annotated.}
    \label{fig:image2}
  \end{subfigure}
  \begin{subfigure}[t]{0.19\linewidth}
    \centering
    \includegraphics[width=\linewidth]{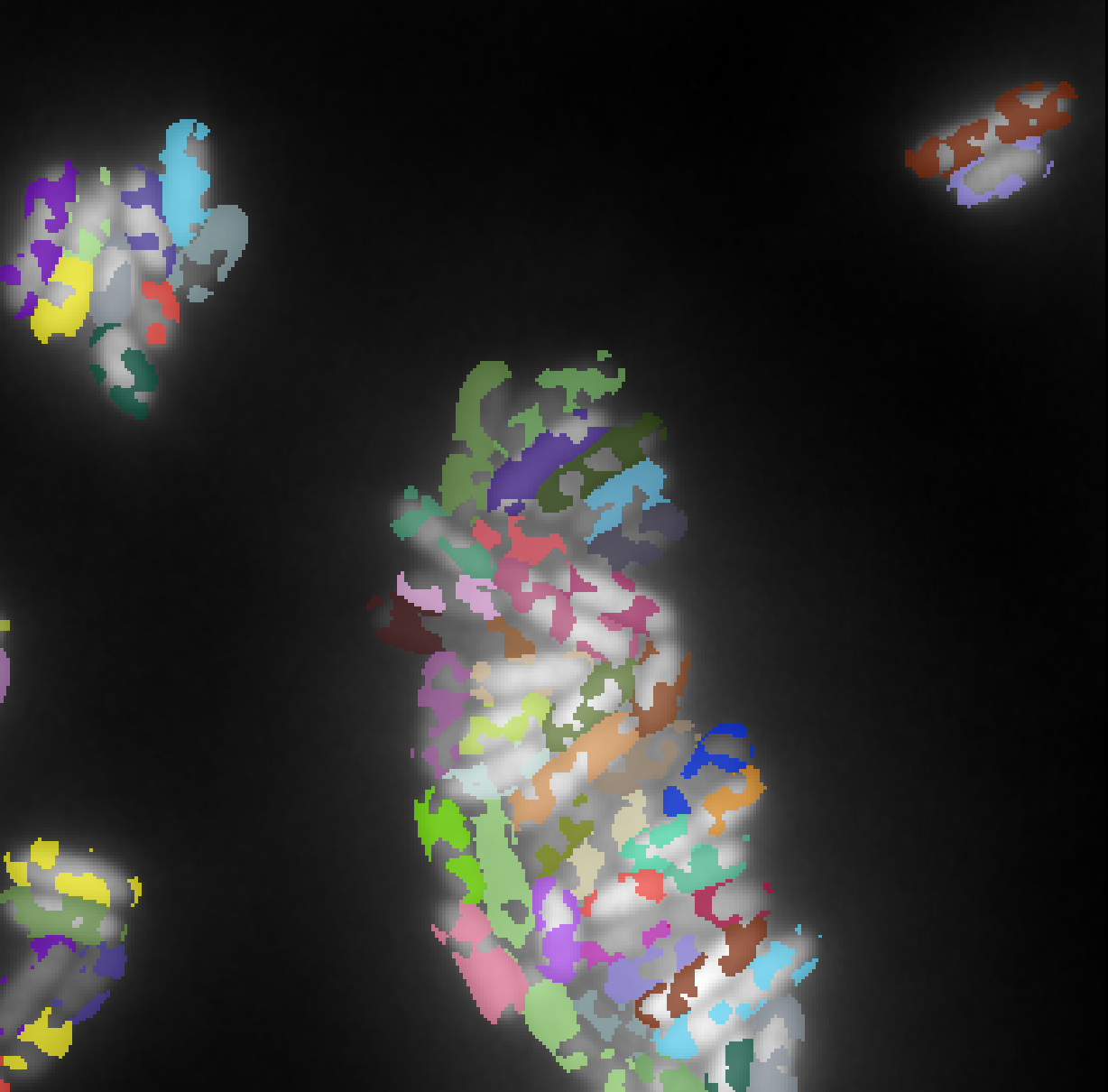}
    \caption{50\%}
    \label{fig:image3}
  \end{subfigure}
  \begin{subfigure}[t]{0.19\linewidth}
    \centering
    \includegraphics[width=\linewidth]{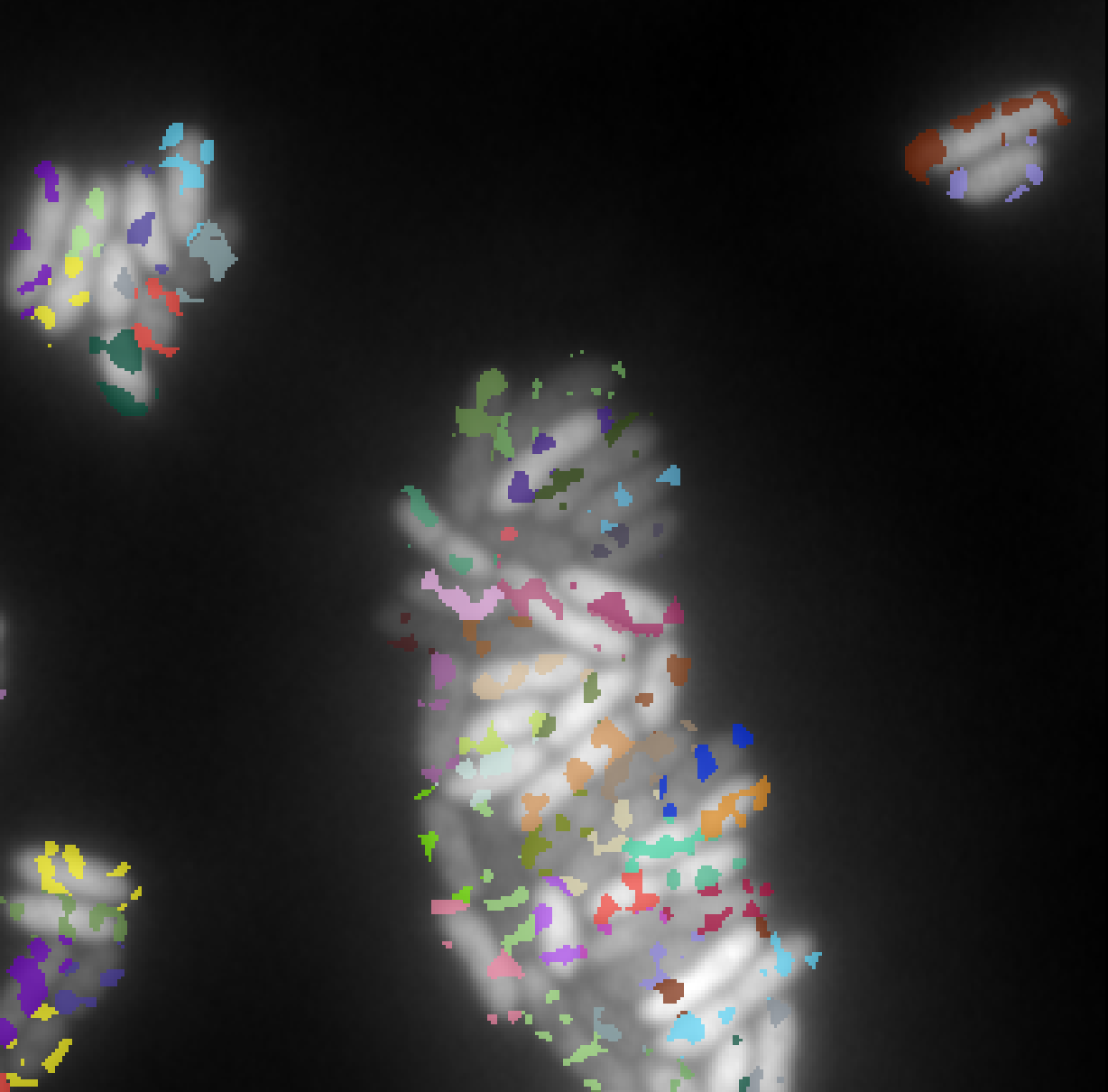}
    \caption{25\%}
    \label{fig:image3}
  \end{subfigure}
  \begin{subfigure}[t]{0.19\linewidth}
    \centering
    \includegraphics[width=\linewidth]{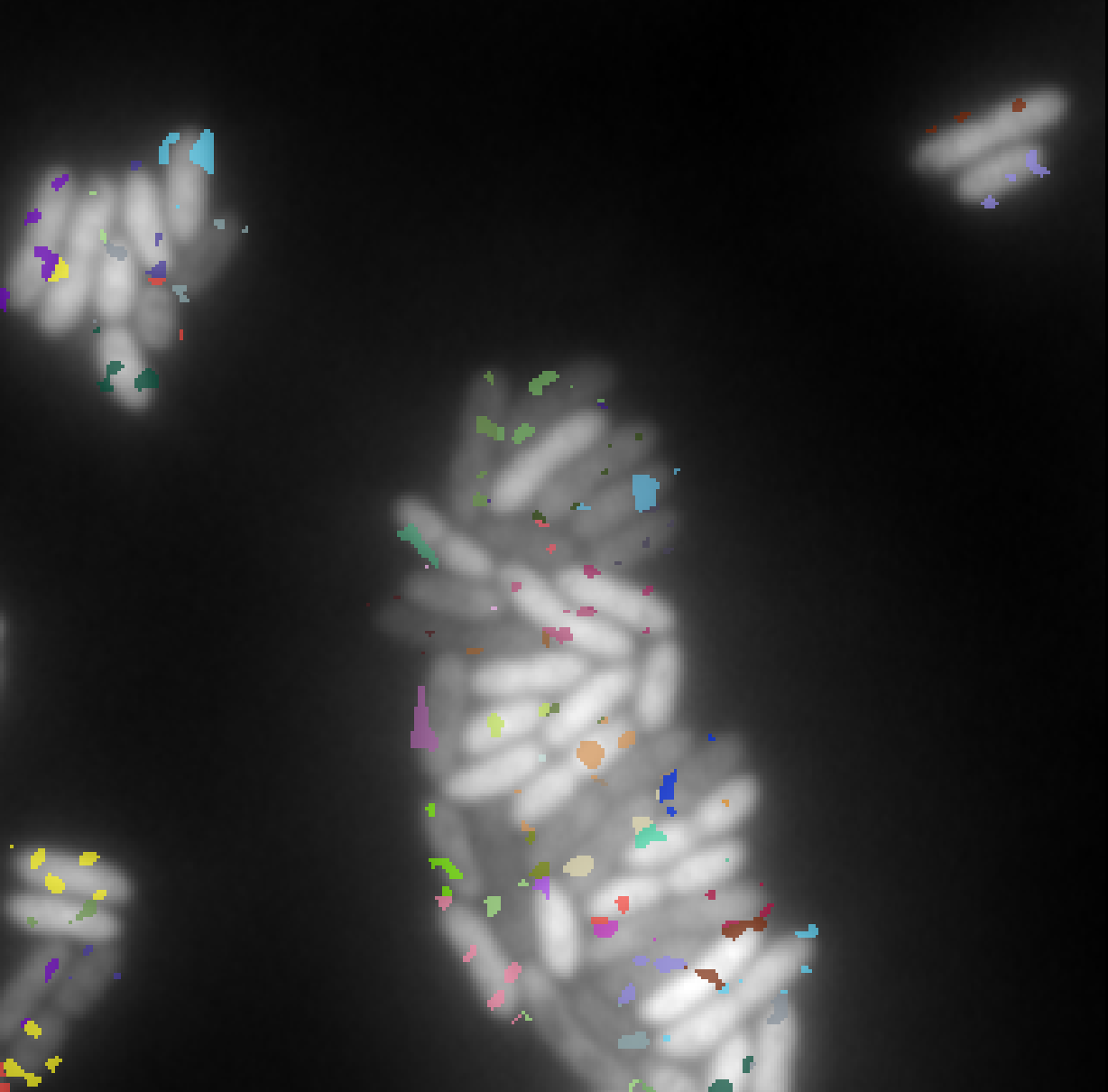}
    \caption{10\%}
    \label{fig:image3}
  \end{subfigure}
  \caption{Annotations example with four different sparsity levels and a domain regularity $\sigma=50$.}
  \label{fig:training_image}
\end{figure*}

\subsubsection{Training details}

For each dataset, we trained the Sketchpose model for 1000 epochs. 
This takes about 5 hours for the Microbeseg dataset and 30 hours for the PanNuke dataset using our Nvidia RTX5000. 
Each model was trained with the four percentages of annotated pixels we described above.
There was no data augmentation, except random cropping of the images to 224x224 pixels.
This is the size which was used in the original Omnipose model.

\subsubsection{Results}





We evaluated and compared the performance using two alternative models. 
The first one is the Cellpose 3.0 model~\cite{stringer2025cellpose3}, which regresses a distance to the objects centroids. 
The second one is the LKCell model~\cite{cui2024lkcell}, which is a better performing variant of CellVit~\cite{horst2024cellvit}, itself a variant of HoverNet~\cite{graham2019hover}.
These models are among the most popular and best performing for histopathology images such as the PanNuke dataset. 
While they perform classification and segmentation, we will just compare their ability to segment objects, as we are not interested in the classification task here.
We trained both LKCell and Cellpose 3.0 from scratch on each of the two datasets with complete annotations for 1000 epochs.

We compare the performance using other standard quality metrics used in instance segmentation. In all the metrics below, an IoU threshold of 50\% is used. 
\begin{itemize}\itemsep0pt
  \item \emph{Precision}: measures how many of the predicted positives are actually correct (i.e., the fraction of predicted segments that are true).
  \item \emph{Recall}: how many of the actual positives were correctly predicted (i.e., how complete the prediction is).
  \item \emph{F1-Score}: harmonic mean of precision and recall, balancing both.
  \item \emph{Detection Quality (DQ)}: evaluates object-level detection performance, penalizing missed or extra objects. It evaluates the ability to detect object instances correctly, regardless of segmentation quality.
  \item \emph{Segmentation Quality (SQ)}: measures how well matched objects are segmented, assuming correct pairing, reflecting the quality of the predicted segment.
\end{itemize}

The main results are reported in Table~\ref{tab:segmentation_metrics}, and several key observations emerge.
  
\paragraph{Best-performing methods}
On the MicrobeSeg dataset, Cellpose and Sketchpose (100\% annotations) deliver the best performance, while LKCell lags behind with a 15\% lower F1-score. Conversely, on PanNuke, LKCell outperforms both competitors with a 4–6\% F1-score gain, confirming its suitability for this dataset.

  \paragraph{Impact of annotation density}
For MicrobeSeg, reducing annotation density leads to a significant performance drop for Sketchpose: around 10\% at 50–25\% annotations, and up to 20\% with only 10\%. 
Depending on the application, such degradation may or may not be acceptable.

The situation is more favorable for PanNuke. Sketchpose maintains stable performance, with only a 4\% drop when reducing annotations from 100\% to 10\%. 
Given that sparse annotations likely reduce annotation time by a factor of ten, this is a promising result—especially since random sampling was used. 
In practice, targeted annotations by an expert would likely yield even better outcomes.

This contrast may stem from dataset characteristics: PanNuke contains simpler, roughly convex objects, while MicrobeSeg features elongated or irregular shapes, making annotation density more critical.

  \paragraph{IoU matching thresholds}
  To provide a refined view of the performance, we also plot the F1-score as a function of the IoU matching threshold in Figure~\ref{fig:xp_bact_metrics}.
  There, we see that the ranking between the methods is stable up to a mathcing threshold of 70\%, which is usually considered a high precision segmentation in biological imaging.
  A surprising phenomenon is that Sketchpose trained with 25\% of annotations performs better than the 50\% model on the MicrobeSeg dataset. Similarly, the 50\% model performs better than the 100\% model on the PanNuke dataset. This might indicate that carefully selected annotations can lead to better results than complete annotation, or helps reducing the influence of errors in the gold-standard database.

  Sketchpose is the first distance-based method allowing to take advantage of this observation.
 

\begin{table*}[ht]
\centering
\caption{Comparison of segmentation methods trained and tested on the MicrobeSeg and PanNuke datasets using various metrics. All values are computed with an IoU threshold of 50\%.\label{tab:segmentation_metrics}}
\begin{tabular}{lcccccc||cccccc}
\toprule
 & \multicolumn{6}{c||}{\emph{\textbf{MicrobeSeg dataset}}} & \multicolumn{6}{c}{\emph{\textbf{PanNuke dataset}}} \\
 & LKCell & Cellpose & 100\% & 50\% & 25\% & 10\% & LKCell & Cellpose & 100\% & 50\% & 25\% & 10\% \\
\midrule
Precision           & 0.73 & \textbf{0.88} & 0.77 & 0.63 & 0.68 & 0.55 & 0.82 & \textbf{0.88}  & 0.86 & 0.84 & 0.78 & 0.75 \\
Recall              & 0.65 & 0.75 & \textbf{0.89} & 0.86 & 0.83 & 0.78 & \textbf{0.84} & 0.71  & 0.71 & 0.75 & 0.77 & 0.73 \\
F1-Score            & 0.66 & 0.79 & \textbf{0.81} & 0.69 & 0.72 & 0.60 & \textbf{0.83} & 0.78  & 0.77 & 0.79 & 0.77 & 0.73\\
DICE                & 0.82 & \textbf{0.86} & 0.85 & 0.85 & 0.83 & 0.80 & \textbf{0.88} & 0.87 & 0.87 & \textbf{0.88} & 0.87 & 0.87 \\
Jaccard             & 0.73 & \textbf{0.79} & 0.75 & 0.75 & 0.72 & 0.69 & \textbf{0.81} & 0.79 & 0.79 & 0.80 & 0.79 & 0.77 \\
Det. Quality (DQ) & 0.73 & \textbf{0.88} & 0.77 & 0.63 & 0.68 & 0.55 & 0.82 & \textbf{0.88}   & 0.86 & 0.84 & 0.78 & 0.75 \\
Seg. Quality (SQ) & 0.73 & \textbf{0.79} & 0.75 & 0.75 & 0.72 & 0.69 & \textbf{0.81} & 0.79 & 0.79 & 0.80 & 0.79 & 0.77 \\
\bottomrule
\end{tabular}
\end{table*}

\begin{figure*}[ht]
  \centering
  \begin{subfigure}[t]{0.45\linewidth}
    \centering
    \includegraphics[width=\linewidth]{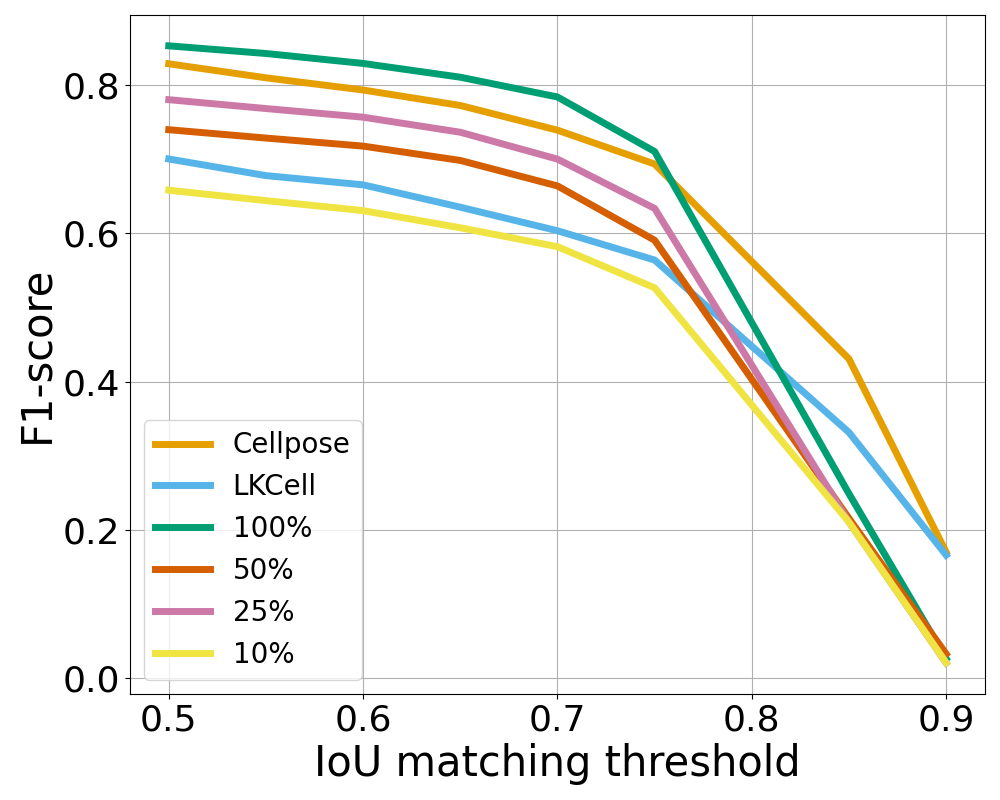}
    \caption{F1-score over the Microbeseg test dataset.}
    \label{fig:ap}
  \end{subfigure}
  \hfill
  \begin{subfigure}[t]{0.45\linewidth}
    \centering
    \includegraphics[width=\linewidth]{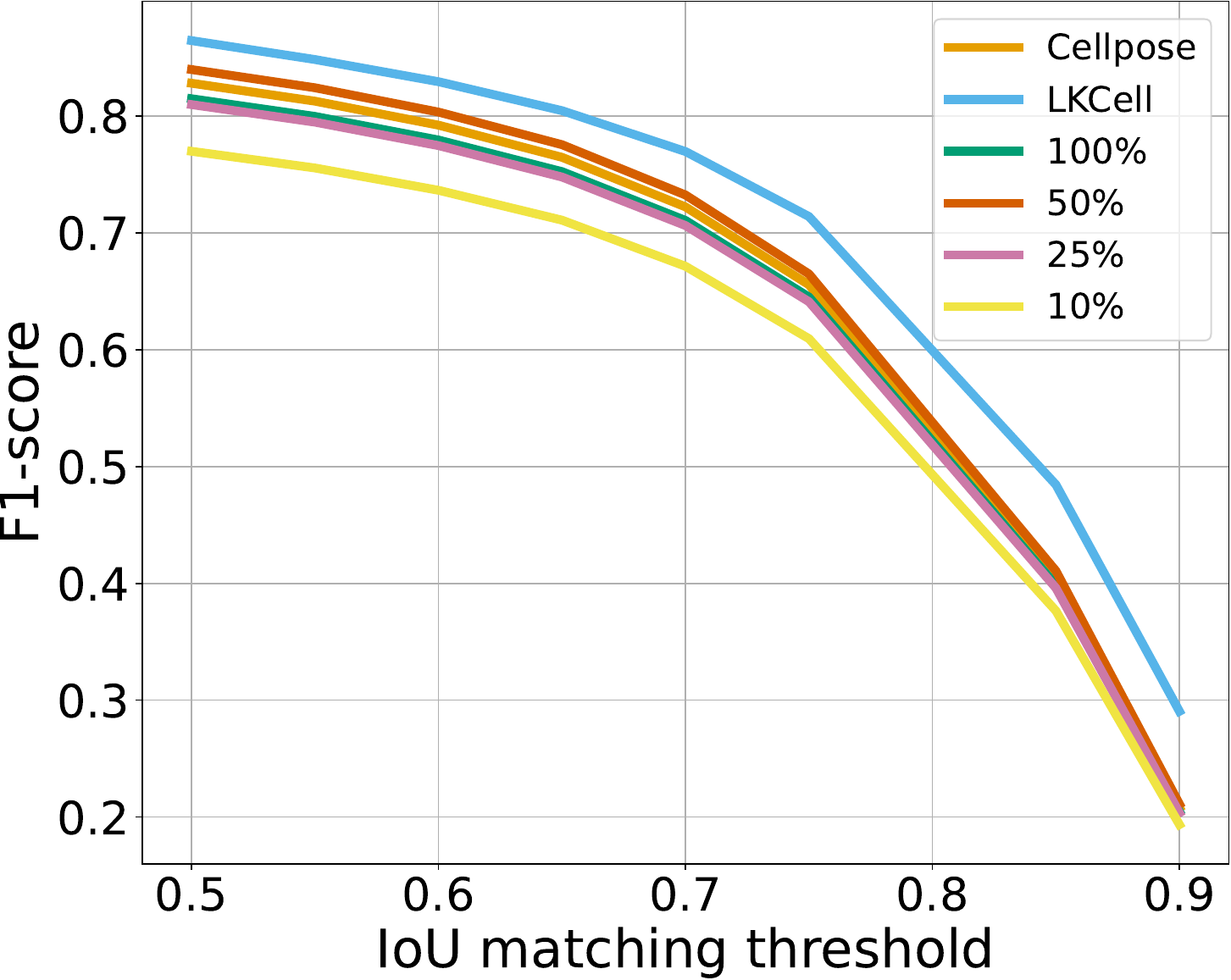}
    \caption{F1-score over the PanNuke test dataset.}
    \label{fig:ap}
  \end{subfigure}
  \caption{Evaluation of segmentation F1-score over Microbeseg's and PanNuke's test datasets as a function of the percentage of annotated pixels compared to Cellpose and LKCell.}
  \label{fig:xp_bact_metrics}
\end{figure*}

\section{Discussion \& conclusion}

We introduced Sketchpose, an open-source plugin to extend the applicability of Omnipose to partial annotations. 
From a methodological aspect, we developed a theory making it possible to use distance functions, despite having only access to partial information on the objects boundaries. 
From a more practical viewpoint, we developed an interactive interface within Napari, which facilitates efficient online learning with a real-time visualization of the training progress. The multi-threaded implementation allows users to continue annotating while the neural network trains or infers.

The new training procedure was tested in three different frames: i) training a neural network from scratch and just a few strokes, ii) improving the weights of a pre-trained network (a.k.a. transfer learning or human in the loop), iii) training with massive, but partial annotations.  

For point i), frugal annotation works surprisingly well on a few test cases despite really limited information. A dozen strokes are already enough to provide results on par -- or better -- than pre-trained networks.

For point ii), our experiments demonstrated the potential benefits of using transfer learning. That is, starting with a pre-trained Omnipose models, we can further refine it using our methodology. 

As for point iii), the conclusions are diverse. For datasets containing simple object shapes, such as PanNuke, it seems that a limited number of annotations (down to 25\%) is sufficient to achieve results on par or even better than complete annotations. For more complex objects, it seems that complete annotations are still preferable. 
These conclusions should be validated on a case by case basis, but the ability to annotate while training make it possible to take the minimal amount of annotation time for a given task. 

The method also shows a few limitations. First, it would benefit from faster training times to make the method even more interactive. We plan to improve this aspect in the forthcoming versions. Second, it is important to mention that our formalism is currently restricted to the two dimensional setting with two labels (background / foreground). 
Extending the methodology to numerous classes is rather straightforward, and the proposed ideas extend directly to this case. 
However, the proposed strategy do not extend to 3D directly. It could be used if the user was able to delineate a surface surrounding the objects of interest, but not just curves in 2D. Indeed, this would result in an empty valid distance set (see Theorem \ref{thm:valid_dist_set}) and unadapted loss functions. This limitation of the method must be put into perspective by the fact that even the Cellpose 3D model is based on 2D predictions only, which are aggregated in post-processing.

In summary, the proposed method demonstrated numerous qualities in 2D for partial annotations. We showed that it is possible to train complex networks with a few sketches, reducing the annotation burden significantly. Further developments are needed to accelerate the training process and for a multi-class extension in 3D.



\acks{C. Cazorla was a recipient of ANRT (Agence Nationale pour la Recherche et la Technologie) in the context of the CIFRE Ph.D. program (N°2020/0843) with \href{https://www.imactiv-3d.com/}{Imactiv-3D} and Institut de Mathématiques de Toulouse (IMT). 
P. Weiss acknowledges a support from \href{https://anr.fr/ProjetIA-19-P3IA-0004}{ANR-3IA Artificial and Natural Intelligence Toulouse Institute} ANR-19-PI3A-0004 and from the \href{https://anr.fr/Project-ANR-21-CE48-0008}{ANR Micro-Blind} ANR-21-CE48-0008.
This work was performed using HPC resources from GENCI-IDRIS (Grant 2021-AD011012210R1). 

We are grateful for the information provided by Kevin John Cutler about the original Omnipose implementation. 
The authors acknowledge \href{http://www.atlantic-bone-screen.com/language/en/}{Atlantic Bone Screen} for providing the osteoclasts image and \href{https://www.diva-expertise.com/fr/}{DIVA Expertise} for providing the adipocytes image.}


\ethics{The work follows appropriate ethical standards in conducting research and writing the manuscript, following all applicable laws and regulations regarding treatment of animals or human subjects.}

\coi{We declare we don't have conflicts of interest.}

\data{The code is available there: \url{https://bitbucket.org/koopa31/napari-sketchpose/src/master/}.
The documentation is available there: \url{https://sketchpose-doc.readthedocs.io/en/latest/}.}

\bibliography{refs}







\appendix

\section{Proof of the valid distance set theorem}\label{appendixA}
We start with a basic observation.
\begin{proposition}[Properties of the distance function\label{prop:distance}]\ 
\begin{itemize}
    \item $\Ac_1\subset \Ac_2 \Rightarrow \forall \xb\in \Xc, \dist(\xb,\Ac_2) \leq \dist(\xb, \Ac_1)$.
    \item $\Ac_1\subset \Ac_2$ and $\xb\in \Ac_1$ $\Rightarrow$ $\dist(\xb,\partial \Ac_1)\leq \dist(\xb,\partial \Ac_2)$.
\end{itemize}
\end{proposition}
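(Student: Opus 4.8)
The plan is to treat the two bullets separately, since the first is a one-line consequence of the definition of $\dist$ as an infimum, while the second needs a short connectedness argument.

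For the first property I would argue directly from $\dist(\xb,\Ac)=\inf_{\xb'\in\Ac}\|\xb-\xb'\|_2$. Since $\Ac_1\subset\Ac_2$, the infimum defining $\dist(\xb,\Ac_2)$ runs over a superset of the one defining $\dist(\xb,\Ac_1)$, and an infimum over a larger index set can only decrease (or preserve) its value. Hence $\dist(\xb,\Ac_2)\leq\dist(\xb,\Ac_1)$ for every $\xb\in\Xc$, with no hypothesis on $\xb$.

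For the second property, fix $\xb\in\Ac_1$ and set $r\eqdef\dist(\xb,\partial\Ac_2)$. If $\xb\in\partial\Ac_1$ the left-hand side is $0$ and the inequality is immediate, so I may assume $\xb\in\inter{\Ac_1}$. The idea is to exhibit a point of $\partial\Ac_1$ at distance at most $r$ from $\xb$. I would pick $\yb\in\partial\Ac_2$ realizing the distance $r$ (legitimate because $\partial\Ac_2$ is closed; otherwise an $\varepsilon$-approximation works) and follow the segment $\gamma(t)=(1-t)\xb+t\yb$, $t\in[0,1]$. At $t=0$ we sit in the open set $\inter{\Ac_1}$, whereas $\yb\in\partial\Ac_2$ forces $\yb\notin\inter{\Ac_2}\supseteq\inter{\Ac_1}$, by monotonicity of the interior under $\Ac_1\subset\Ac_2$. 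Setting $T\eqdef\sup\{t:\gamma([0,t])\subseteq\inter{\Ac_1}\}$, the point $\gamma(T)$ is simultaneously a limit of points of $\inter{\Ac_1}$ (from $t<T$) and a limit of points outside $\inter{\Ac_1}$ (from $t>T$, or $t=T=1$ with $\yb\notin\inter{\Ac_1}$), hence $\gamma(T)\in\close{\Ac_1}\setminus\inter{\Ac_1}=\partial\Ac_1$. Since $\|\xb-\gamma(T)\|_2=Tr\leq r$, this gives $\dist(\xb,\partial\Ac_1)\leq r=\dist(\xb,\partial\Ac_2)$.

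The main obstacle lives entirely in the second bullet: one must justify that a straight segment from an interior point of $\Ac_1$ to a boundary point of $\Ac_2$ necessarily crosses $\partial\Ac_1$, and that the crossing occurs no later than the endpoint $\yb$. This is exactly where connectedness of the segment and the inclusion $\inter{\Ac_1}\subseteq\inter{\Ac_2}$ are used. The only technical nuisance is the attainment of the infimum $r$, handled in $\R^n$ by replacing $\Ac_2$ with its closure (distances are unchanged) or by a routine $\varepsilon$-argument, and in the discrete $4$-connected setting by the analogous combinatorial path argument, where $\gamma$ becomes a monotone pixel path and the crossing is detected at the first pixel leaving $\inter{\Ac_1}$.
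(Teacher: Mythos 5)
Your proof is correct, and for the second bullet it takes a route dual to the paper's. The paper fixes the radius $\rho \eqdef \dist(\xb,\partial \Ac_1)$ and proves, by contradiction, the ball inclusion $\Bc(\xb,\rho)\subseteq \inter{\Ac_1}\subseteq \inter{\Ac_2}$; passing to complements gives $\partial \Ac_2 \subseteq \Bc(\xb,\rho)^c$, and the inequality follows by taking infima --- notably, no nearest point on $\partial \Ac_2$ is ever selected, so attainment of the infimum never arises. You instead bound $\dist(\xb,\partial\Ac_1)$ from above: you pick $\yb\in\partial\Ac_2$ (nearly) realizing $r=\dist(\xb,\partial\Ac_2)$ and show that the first exit point $\gamma(T)$ of the segment $[\xb,\yb]$ lies on $\partial\Ac_1$ at distance $Tr\leq r$. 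The topological core is identical in both arguments --- the first exit point of a segment leaving an open set belongs to its boundary (your $T\eqdef\sup\{t:\gamma([0,t])\subseteq\inter{\Ac_1}\}$ plays exactly the role of the paper's $z^*\eqdef\argmin_{x'\in[x,z]\cap \inter{\Ac_1}^c}\|x-x'\|$), combined with monotonicity of the interior --- but your direct version dispenses with the auxiliary ball-inclusion lemma and the reductio, at the mild cost of needing $\yb$ to exist, which you correctly discharge ($\partial\Ac_2$ is closed, or an $\varepsilon$-approximation). One phrase deserves tightening: $\gamma(T)$ need not be a limit of points $\gamma(t)$ with $t>T$ lying outside $\inter{\Ac_1}$, since the segment may exit and re-enter instantly (e.g.\ when $\inter{\Ac_1}$ is a punctured ball and the segment passes through the puncture); the clean justification is that for every $t>T$ there exists $t'\in[T,t]$ with $\gamma(t')\notin\inter{\Ac_1}$, or more simply: if $\gamma(T)\in\inter{\Ac_1}$ and $T<1$, openness and continuity would give $\gamma([0,T+\delta])\subseteq\inter{\Ac_1}$, contradicting the definition of $T$, while if $T=1$ then $\gamma(T)=\yb\notin\inter{\Ac_1}$ directly. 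Your first bullet coincides with the paper's one-line infimum argument.
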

\begin{proof}
The first item is direct:
    \begin{align*}
         \dist(\xb,\Ac_1) &= \inf_{\xb'\in \Ac_1}  \dist(\xb',\xb) \\
         &\geq \inf_{\xb'\in \Ac_2} \dist(\xb',\xb) = \dist(\xb,\Ac_2).
    \end{align*}

Here is one proof of the second iten by separating the two cases: either $x \in \mathring{\Ac_1}$ or $x \in \partial \Ac_1$. 
\begin{itemize}
\item \emph{Case 1: $x \in \partial \Ac_1$.} This case is trivial since $\dist(\xb,\partial \Ac_1) = 0 \leq \dist(\xb,\partial \Ac_2)$ by positivity of the distance.

\item \emph{Case 2: $x \in \mathring{\Ac_1}$.} In that case, the key argument is to show that the open ball of radius $\dist(\xb,\partial \Ac_1)$ centered in $x$ is included in $\mathring{\Ac_1}$. Precisely
\begin{equation}
\label{eq:ball_in_interior}
\tilde{\Bc} \eqdef \Bc\left(x, \dist(\xb,\partial \Ac_1)\right) \subseteq \mathring{\Ac_1} \subseteq \mathring{\Ac_2}.
\end{equation}
Indeed having Equation \eqref{eq:ball_in_interior} established implies by contraposition that
\begin{equation}
\partial \Ac_2 \subseteq \mathring{\Ac_2}^c \subseteq \tilde{\Bc}^c
\end{equation}
where the first inclusion is given by $\partial \Ac_2 \eqdef \bar{\Ac_2} \setminus \mathring{\Ac_2} \subseteq \mathring{\Ac_2}^c$.
Therefore, taking infimum with respect to these sets, implies the following inequalities and by the way the intended result.
\begin{align*}
\dist(\xb,\partial \Ac_2) &= \inf_{z \in \partial \Ac_2} \| z - x\| \\
	&\geq \inf_{z \in \mathring{\Ac_2}^c} \| z - x\|  \\
	&\geq \inf_{z \in \tilde{\Bc}^c} \| z - x\|  = \dist(\xb,\partial \Ac_1).
\end{align*}

So let's prove Equation \eqref{eq:ball_in_interior}: by contradiction, assume that there exists $z \in \tilde{\Bc} \cap \mathring{\Ac_1}^c$. 
Notice that $[x,z] \cap \mathring{\Ac_1}^c$ is a compact set (here $[x,z]$ denotes the closed segment between the points $x$ and $z$). 
Thus 
\begin{equation*}
z^* \eqdef \argmin_{x' \in [x,z] \cap \mathring{\Ac_1}^c} \| x - x'\|
\end{equation*} 
is well defined and the semi-open segment $[x, z^*[$ is included in $\mathring{\Ac_1}$. This implies that $z^* \in \partial \Ac_1$ since the sequence $z_n \eqdef x + (1-\frac{1}{n}) \left( z^* - x\right) \in [x, z^*[ \subseteq \mathring{\Ac_1}$ converges to $z^*$. The contradiction comes from 
\begin{equation*}
\dist(\xb,\partial \Ac_1) \leq \| z^* - x\| \leq \| z - x \| < \dist(\xb,\partial \Ac_1).
\end{equation*}
The first inequality holds because $z^* \in \partial \Ac_1$, the second one because $z^* \in [x,z]$ and last one because $z \in \tilde{\Bc}$.

By proof by contradiction, Equation \eqref{eq:ball_in_interior} holds. 
\end{itemize}
In conclusion, in all cases the inequality is verified.
\end{proof}

Theorem \ref{thm:valid_dist_set} can be proven in two steps. 
    First, notice that the inclusion $\Bc \subseteq \Ec$ (Assumption \ref{ass:ass1}) and the first bullet in Proposition \ref{prop:distance} implies that $\dist(\xb, \Ec) \leq \dist(\xb,\Bc)$ for any $\xb\in \Xc$.

    Let's establish the converse inequality. Let $\xb$ denote an arbitrary point in $\Dc$. Aiming for a proof by contradiction, assume that $\dist(\xb, \Ec) < \dist(\xb, \Bc)$.
     We can proceed by separating two cases:
    \begin{itemize}
        \item \emph{Case 1: $\dist(\xb, \Ec) = 0$}. This implies that $\xb \in \Ec$ since the set $\Ec$ is closed as a finite union of closed sets $\partial \Xc_{i,j}$. Moreover, as $\xb$ belongs to $\Dc$, in particular $\xb$ belongs to $\Sc$. It is sufficient to apply \eqref{eq:boundaries_drawn} and obtain $\xb \in \Ec \cap \Sc \subseteq \Bc$ which is inconsistent with $\dist(\xb, \Bc) > 0$. 
        \item \emph{Case 2: $r \eqdef \dist(\xb, \Ec) > 0$}. The point $\xb$ verifies $\dist(\xb, \Bc) \leq \dist(\xb, \Cc \Bc)$ as $\xb \in \Dc$. Let us define $r \eqdef \dist(\xb,\Ec) > 0$ and 
		\begin{align*}
		\varepsilon &\eqdef \dist(\xb,\Cc \Bc) - \dist(\xb,\Ec) \\
		 &\geq \dist(\xb,\Bc) - \dist(\xb,\Ec) > 0
		\end{align*}		        
        by assumption. Since $\xb$ belongs to $\Sc$, there exists $i_0 \in \{0,1\}$ such that $\xb \in \Sc_{i_0}$. Because $\dist(\xb,\Ec) = r$, there exists a point $\zb \in \Ec$ such that $r \leq \|\xb - \zb \|_2 = r + \varepsilon/2$
    
    	\begin{itemize}
    		\item \emph{Case 2.a: $\zb \in \Sc_{i_0}$.} By assumption \eqref{eq:boundaries_drawn}, the contradiction comes quickly since now
    		\begin{equation}
    		\zb \in \Sc_{i_0} \cap \Ec \subseteq \Sc \cap \Ec \subseteq \Bc \subseteq \Cc \Bc
    		\end{equation} 
    		and this implies the contradictive inequality 
    		\begin{equation*}
    		r + \varepsilon = \dist(\xb, \Cc \Bc) \leq \|\xb-\zb\|_2 \leq r + \varepsilon/2.
\end{equation*}
    		\item \emph{Case 2.b: $\zb \notin \Sc_{i_0}$.}  In that case, we may define the point $\yb$ on the line $[\xb,\zb]$ which is the nearest from the point $\xb$ and also in $\partial \Sc_{i_0}$. Since $\yb \in \partial \Sc_{i_0} \subseteq \Cc\Bc$, it implies a contradiction as intended:
    		\begin{align*}
    		r+\varepsilon &= d(\xb, \Cc\Bc) \leq \|\xb, P(s)\|_2 \\
    		&= s \| \xb-\zb\|_2 \leq \| \xb-\zb\|_2 = r+ \varepsilon/2.
    		\end{align*}
    		The point $\yb$ is defined as $P(s)$ where the map $P : t \mapsto t\zb + (1-t) \xb$ assigns to each scalar $t \in [0,1]$ a point $P(t)$ of the line and set
    		\begin{equation}
    		s \eqdef \inf_{P(t) \in \Sc_{i_0}} t.
			\end{equation}
			Since $P(0) = \xb \in \Sc_{i_0}$ and $P(1) = \zb \notin \Sc_{i_0}$, the scalar $s$ is well defined. The remaining task is to show that $P(s) \in \partial \Sc_{i_0}$. The argument works by construction and with a topological argument. Indeed, by definition of the infimum, there exists a sequence $0 < \eta_n \rightarrow 0$ such that $P(s + \eta_n) \notin \Sc_{i_0}$, thus $P(s) \notin \mathring{\Sc_{i_0}}$. Also by definition, for all $0 \leq \eta < s$, $P(\eta) \in \Sc_{i_0}$, thus $P(s) \in \bar{\Sc_{i_0}}$
    	\end{itemize}
    \end{itemize}
In both cases, the assumption $\dist(\xb, \Ec) < \dist(\xb, \Bc)$ leads to a contradiction. We deduce that $\dist(\xb, \Ec) \geq \dist(\xb, \Bc)$.

The second inequality in Theorem \ref{thm:valid_dist_set} is a consequence of the property \eqref{eq:eq_ass1}. Indeed, this property implies that we can separate the strokes $\Sc_i$ into connected components $\Sc_{i,j}$. These are subsets of the connected components $\Xc_{i,j'}$ for some $j'$ depending on $j$.
The inequality is then just a consequence of Proposition \ref{prop:distance}. 
%
%
%

\end{document}